\documentclass[journal]{IEEEtran}
\usepackage{multirow} 
\usepackage[style=ieee]{biblatex} 
\bibliography{mybibliography.bib}
\usepackage{hyperref}
\usepackage{amsmath}
 \usepackage{multirow}
 \usepackage{siunitx}
 \usepackage{soul}
 \usepackage{graphicx} 
\usepackage{amsmath,amssymb,amsthm,epsfig}
\newtheorem{theorem}{Theorem}[section]
\usepackage{hyperref}
\usepackage{xcolor}
\usepackage[font=small,labelfont=bf]{caption}
\usepackage{url}
\usepackage{graphicx}  
\usepackage{float} 
\usepackage{booktabs} 
\usepackage{array}
\newcolumntype{R}{>{\raggedleft\arraybackslash}p{1.5em}}
\usepackage{graphicx} 
\usepackage{bm}
\usepackage{tabularx}
\usepackage{multirow}
\usepackage{booktabs}
\usepackage{makecell}

\begin{document}
\title{{Statistical Analysis of using the Shapley Value for Sensor Anomaly Localization with Accurate Classifiers}\thanks{ This work was supported by the U.S. Office of Naval Research under Grant N00014-22-1-2626.}}  
\author{Xubin Fang, Rick S. Blum, \IEEEmembership{Life Fellow, IEEE}  \thanks{Xubin Fang and Rick S. Blum are with the Electrical and Computer Engineering  Department of Lehigh University (emails: xuf220@lehigh.edu, rblum@eecs.lehigh.edu). }}
\maketitle

%{\color{red} Think carefully about if we should rewrite this since this may be a bad way to introduce our ideas. This does not explain why we consider $P_e$. Connecting localization and classification is not trivial.}
\begin{abstract}
%\textcolor{red}{In the abstract, comparison of pe in Shapley takes some explaining, address it in the new paper like the old paper}

Recent publications have suggested using the Shapley value for sensor anomaly/attack localization. We study the performance of such an approach by using mathematically defined optimum binary classifiers in the Shapley value calculation. 
To judge localization performance, we study the ability of the Shapley value 
of a given sensor observation to determine if that observation is anomalous. First, we prove that for cases with independent sensor observations, an  optimized anomaly test using the Shapley value is equivalent to an optimized lower-complexity anomaly test using a single term in the Shapley value calculation, yielding the exact same probability of error.

For some popular dependent observation cases involving two sensors, including correlated bivariate Gaussian/Laplacian probability density functions and constant/Gaussian attacks/anomalies, we prove that these two tests are fundamentally different, yielding different decision regions and error probabilities. Further, we prove  that the Shapley value test is sometimes strictly inferior to the other (single term in Shapley calculation) test in certain statistically dependent bivariate Gaussian scenarios with large correlation magnitude and 
additive attacks/anomalies, while it is strictly superior in others, depending on the sign of the correlation. One can combine these two approaches to obtain a strictly better approach in these cases. 
These results, which provide the first theoretical statistical analysis of Shapley-based localization, seem very interesting based on 
the wide acceptance of the Shapley value by many researchers and should encourage further research on this topic.  Numerical results are provided which illustrate our findings. 
\end{abstract}

\begin{IEEEkeywords}
Shapley value, anomaly detection, anomaly localization, feature attribution 
\end{IEEEkeywords}

\section{Introduction}\label{intro}

Sensors are frequently employed to monitor the operation of many important real world systems \cite{willettnew,Varsheynew,BChennew}.  On the other hand, cyber attacks/anomalies on these sensor 
systems are well known to cause significant issues and 
to date these systems need further 
protection against these attacks \cite{sensors-security}.   The popularity of the internet of things, where sensor systems are connected to the internet 
\cite{GRR2020}, makes it even more important to provide security for these systems since the internet connections can provide easier access to the sensors. 

The recent work in \cite{E-sfd,ameli2022unsupervised}  
described a new approach which could be very helpful 
in identifying cyber attacked/anomalous sensor data.  The approach employs the popular Shapley value to localize attacked/anomalous sensor data, thus describing the sensors whose data is anomalous/attacked.  The Shapley value has been very popular in the game theory and machine learning communities \cite{li2023surveyexplainableanomalydetection}, which makes this new idea worthy of additional study. 
We theoretically investigate this approach here, in a mathematical and well controlled setting. We focus on cases where optimum classifiers are used in the Shapley value calculation.  
Such results should be very meaningful in practice, where people attempt to employ classifiers which achieve performance which is close to optimum. 

{ 
While the Shapley value has received very little statistical analysis to date for sensor anomaly localization, the proper place for such analysis to appear and be peer reviewed 
%in the IEEE community 
is the IEEE Transactions on Signal Processing, which has published rigorous statistical analysis of similar decision making/localization algorithms employed for many different applications for its long history as evidenced by \cite{aalo1992asymptotic, 
RNiu, chen2000optimality, chamberland2003decentralized, marano2009distributed, 
visa1,
stoica1989music, viberg1991sensor, friedlander1990sensitivity, sayed1997error, horowitz1981performance, stoica2004model, swami1991identifiability, ben2010cramer, lahat2012second, picinbono1995likelihood}. 
It should be noted that the analysis presented here uses similar statistical techniques as those that are frequently employed in Statistical Signal Processing, often for sensor observations. 
}

Let $N$ sensors each provide  a scalar observation to obtain a  whole set of observations denoted by $ x_1, x_2, ..., x_N$. 
The formula for calculating  the Shapley value %\textcolor{red} 
{associated} with observation $x_i, 1 \leq i \leq N$, is (see 
\cite{watson2023explainingpredictiveuncertaintyinformation} 
for a detailed game theory derivation)   
\begin{equation}\label{shap}
    \phi (x_i)= \sum_{S \subseteq {\cal N} / (i)} \frac{ |S| ! (N- |S| -1)!}{N!} (v (S \cup (i)) - v(S))
\end{equation} 
where $\phi(x_i)$ denotes the Shapley value for the $i$th sensor observation $x_i$, ${\cal N} = \{ 1, 2, ..., N\}$ is a set which denotes all possible sensor indices, $v$ denotes a soft classifier usually derived from machine learning (here defined mathematically for repeatability/control), and $S$ denotes a particular subset of ${\cal N}$.  Note that each instance of $v$ (standard notation) in (\ref{shap}) generally employs a different number of arguments (sensor indices). 
The output of the soft classifier $v$ describes the likelihood of an attack/anomaly being present in the 
sensor data corresponding to the sensor indices input to $v$. 
Thus, $v$ should produce  a large positive %\textcolor{red}
{scalar} output value if any of the sensor data corresponding to the indices input 
to it are attacked/anomalous.  On the other hand, 
$v$ should produce a very  negative output when none of the sensor data corresponding to the indices input to it are attacked/anomalous.  

The sum in (\ref{shap}) is over all subsets $S $ of  sensors with indices chosen from ${\cal N}$, excluding sensor $i$. A term in the sum in (\ref{shap}) can be seen to 
be composed of the product of two quantities. 
The first quantity, $\frac{ |S| ! (N- |S| -1)!}{N!}$,  
depends on the cardinality of the set $S$, denoted by  $|S|$. This term is $1/N$ (to average) times one over the number of ways to choose $|S|$ things from $N-1$ things (recall we omitted the $i$th sensor). 
 The second quantity in the product, $v (S \cup (i)) - v(S)$ in (\ref{shap}), is the difference between the classifier output 
 with a selected set 
 of sensors including sensor $i$ and the classifier output with the same sensor inputs except the data from sensor $i$ is excluded. This difference 
 expresses the value of the data from sensor $i$. 

% stopped 

To obtain results where we can provide the desired mathematical proofs, 
we set the accurate classifier function $v(\cdot)$ to be the optimum classifier 
(called the Bayes-optimum classifier) 
\cite{poor2013introduction} for the given sensor data provided to $v(\cdot)$, 
the log-likelihood Ratio classifier for the given classification problem.  This allows us to provide a repeatable and controllable analysis needed for mathematical proofs for the mathematically described classification problems we consider.

 {From (\ref{shap}), one term in the sum is $ \mbox{constant} \times v(i)$.  Thus $ \phi(x_i) = \mbox{constant} \times v(i) + \mbox{ other terms} $. }
 Since the Shapley value uses the quantity $v(i)$ 
as part of %\textcolor{red}
{its} calculation, then 
if the Shapley value 
is good for localizing attacks/anomalies one 
would think it would generally provide better performance than that obtained when using $v(i)$ 
directly for this task. However, the 
results in this paper do not indicate this is true in the cases we considered. 
To demonstrate this, we explicitly compare the probability of errors of two tests which decide 
if a given attack/anomaly 
includes sensor $i$. Each test will compare a function to a threshold. If the function is larger than the threshold, then the attack/anomaly is decided 
to include sensor $ i$. If the function is smaller than the threshold, then 
the attack/anomaly is decided 
to not include sensor $ i$.
One test compares $ \phi(x_i)$ to an optimized threshold.  The other test compares $ v(i)$ to an optimized threshold. 
The threshold of each test is chosen to minimize the probability of error of this test. 
The test which gives smaller probability of error is clearly better for localizing the attack/anomaly at sensor $i$. 

Since $v(i)$ is one of many terms in $ \phi(x_i)$, it is not surprising that 
$ \phi(x_i)$ requires additional computations, when compared to $v(i)$. 
To quantify the additional computations, we can consider 
the time complexity, which measures the number of operations 
which must be 
performed when the 
input size, $N$ here,  
is specified. 
To obtain $ v(i)$ in practice (where we do not have a mathematical description of  $ v(i)$), 
we learn a single function and evaluate it for the given sensor data.  To compute $ \phi(x_i)$ in practice, we 
need to learn roughly $ \sum_{j=0}^{N} \binom{N}{j} $ functions, as per (\ref{shap}), and perform %$O(N 2^N)$ 
$O(2^N)$ computations with the evaluations of the learned  functions at the given sensor data.  Even ignoring the learning part, it always requires more effort/time to compute $ \phi(x_i)$ and the complexity grows exponentially in $N$.

{
Surprisingly, using the mathematical model of $v$ just described, our numerical results show that comparing $ v(i)$ to an optimized threshold performs equivalently in terms of probability of error to the test using $ \phi(i)$ for all the cases with independent observations we have considered. We give an analytical proof showing this must be true for all independent observation cases. The proof shows the two tests are exactly the same. 
Thus, for independent observation cases, using $ v(i)$ performs as well as using $ \phi(x_i)$, with lower complexity. 

For scenarios involving dependent observations from two sensors, where only the first sensor is subject to attack/anomaly  (so we consider only $v(1)$ and $\phi(x_1)$), we numerically demonstrated that 
the two tests are different for all cases considered which included some constant additive and Gaussian %\textcolor{red}
{attacks/anomalies}. Further, we are able to prove that the two tests must be different (different probabilities of error and decision regions) for some popular bivariate 
probability density functions (pdfs) under constant additive attacks/anomalies.  We were able to prove this is also true for bivariate Gaussian data under a Gaussian attack/anomaly. We also provide numerical results that show that for high correlation magnitude  %\textcolor{red}
{bivariate} Gaussian data under a constant additive attack/anomaly, sometimes the test using $v(1)$ is better and other times the Shapley test is better. In particular,  Shapley is better in these cases if $\rho>0$, while $v(1)$ is better if  $\rho<0$.  We were able to prove this will always be the case for nonzero additive attacks/anomalies if $|\rho|$ is sufficiently close to one for the considered cases. 

\section{Literature Review}
{
\subsection{Shapley Value: Game Theory, Explainable AI, and Anomaly Detection}}
\par The Shapley value 
%itself 
\cite{Shapley1953} came out of game theory. It {assigns a possibly different value   to each player which represents the determined worth of their contribution to the results from the  group of players. This could be used to  distribute the} total payment to the group.
%or cost. 
The complexity of the Shapley value calculation increases rapidly as the number of players increases and so does the time needed to compute it. 
This makes it very challenging to perform the calculation for large systems. Recently, the Shapley value has been used to explain results from machine learning algorithms, as can be seen in many papers, see for example the review papers  \cite{DeepLearningADAreview, DeppLearningADAsurvey} for more discussion. 
In \cite{li2023surveyexplainableanomalydetection}, an overview is provided of how the Shapley value  and other alternative methods are used in  explainable anomaly detection.  On the other hand, there are many papers related to anomaly detection that do not specifically consider the Shapley value, see the references in  \cite{DeepLearningADAreview, DeppLearningADAsurvey} for example.

\subsection{{Shapley Value in Sensor Anomaly Localization}}
We previously mentioned that \cite{E-sfd,ameli2022unsupervised} suggested using the Shapley value in sensor anomaly localization. 
In \cite{E-sfd}, the authors employ a simplified version of the Shapley value to pinpoint the sensors  at fault in an industrial control system  application. 
In  \cite{ameli2022unsupervised}, the authors also suggest using the Shapley value 
for sensor anomaly localization. 
%, but test these ideas using a non-sensor server machine data set.  

{\subsection{Shapley Value in 
Feature Localization}}
Other research attempts to localize which inputs to a machine learning algorithm most impact 
a particular output decision. We call this feature localization. These studies may or may not be related to sensors or anomaly detection.  
In \cite{XAIShapleyincomplexADML}, the Shapley value and simplifications of the Shapley value are used for feature localization in an anomaly detection application. 
In \cite{networkpacket}, a simplification of the Shapley value is utilized in network traffic data to identify which features are most important for some particular decisions. 
In \cite{characteristicfunction}, the Shapley value is used in tandem with a characteristic function for post-hoc feature localization. The algorithm is tested on different kinds of medical data, some of which may come from sensors. 

In \cite{PCA}, the Shapley value is used to localize reconstruction errors from a principal component analysis.   This is tested on various datasets ranging from {cardiovascular} data, forest cover, radar returns, mammography and satellite imaging. 
The research in \cite{autoencoders} 
applied a simplification of the Shapley value for feature localization 
in autoencoder networks employed for 
anomaly detection.  Various datasets were used in the testing, including warranty claim datasets, credit card fraud detection, military network intrusion detection, and an artificial dataset.  
The research in \cite{modelindependentfeatureattribution} uses a Shapley value-based method for feature localization.   The approach is tested on artificial datasets and medical data.  
The research in \cite{explainingindividual} also employs a simplification of the  Shapley value 
for feature localization, while being tested on
simulated and real mortgage default data. 

{\subsection{Theoretical Analysis of the Shapley Value for Machine Learning}}
The authors in \cite{owen2017shapleyvaluemeasuringimportanceanova} study feature localization by showing that it gives similar 
results as an analysis of variance method. 
In \cite{sundararajan2020many}, the authors compare different Shapley methods 
 theoretically and mathematically to  highlight their advantages for 
 different machine learning 
 models and applications. 
Most importantly, we have not seen any papers in the literature that study the issues enumerated in the last two paragraphs of the Section~\ref{intro}, thus justifying the novelty of this paper. 

{ In a previous paper \cite{blum2025usingshapleyvalueanomaly}, we compared the Shapley value and $v(i)$ tests for a different suboptimal, but reasonable,  mathematical model for the $v(\cdot)$ functions in the Shapley value calculation.  We were able to prove that both tests are the same for cases with statistically independent observations. We did not provide any theorems for the case of statistically dependent observations.  Instead we gave a single numerical result which showed some bad behavior of the Shapley value test for two sensor cases with a statistically dependent bivariate Gaussian distribution when we try to determine if the first sensor is attacked/anomalous when only the second sensor is actually attacked/anomalous.   

In this paper we consider using the optimum classifiers for the $v(\cdot)$ functions in the Shapley value calculation.
Using these $v(\cdot)$ functions, different from those used in \cite{blum2025usingshapleyvalueanomaly}, we show, for the first time,  
the Shapley value and $v(i)$ tests are identical for independent observation cases.  For some popular dependent observation pdfs with two sensors,  including correlated bivariate Gaussian cases and some nonGaussian cases which generalize the bivariate Gaussian case,
we first show 
the Shapley value and $v(i)$ tests must be different and they must produce different probabilities of error. 

Further, we 
prove the Shapley test is strictly inferior to the test using $v(i)$ in this %\textcolor{red}
{paper} for some  statistically dependent bivariate Gaussian observation cases with highly negative correlation, while we prove the Shapley value is better in some other cases with highly positive correlation.  It may be surprising that the Shapley test is not always better, considering the popularity of the Shapley value.  Numerical results agree with these findings.}

\section{Analytical Results}
\label{analytical}
% {\color{red} Check all of this one more time. }\\
We first consider the simpler case of independent observations. 
\subsection{Independent Sensors}
We now show that the test using the Shapley value and the test using $v(i)$ are  the same test for independent sensor observation cases when the optimum classifiers are used for the $v(\cdot)$ functions in the Shapley calculation. 
We make the following assumptions for the following Theorem (Theorem III.1). \begin{enumerate}
\item Assume the unattacked/nonanomalous sensor data at a given time $x_1,x_2,...,x_N$ are statistically independent under $C_0$ and $C_1$, each $x_i, i = 1,\ldots,N $  following the marginal pdf or probability mass function (pmf) $f(x_i|C_j)$ for $i = 1,\ldots,N $ and $j=0,1$.
\item   As discussed previously, we define 
the classifiers $v(\cdot)$ used in the Shapley calculation as the optimum classifiers, natural log of the likelihood ratio  of the sensor data corresponding to the indices in the input to $v$.  The likelihood ratio is the pdf/pmf of the observed data under $C_1$ divided by the pdf/pmf of of the observed data under $C_0$,
assuming the denominator is not zero. Thus 
in this case, 
if the observed sensor data vector is ${\bf x}$, then the likelihood ratio\footnote{The likelihood ratio is more carefully defined in Theorem~\ref{Them-LRTOptimalityGeneral} in order to avoid division by zero.} is $\frac{f({\bf x}|C_1)}{f({\bf x}|C_0) } $. 
This assumption 
(defining the $v(\cdot)$ functions used in the Shapely calculation)  
holds everywhere in this %\textcolor{red}
{paper} (all theorems and numerical results) even if the data are statistically dependent, a case considered outside of the following Theorem. 
\end{enumerate}

\begin{theorem}
\label{Th1}
Under the previous assumptions 1 and 2, a test based on comparing the Shapley value $\phi(x_i)$ to an optimized threshold is exactly the same 
(also same error probability) 
as a test based on comparing $v(i)$ to an optimized threshold. In both cases, the threshold is optimized to minimize the probability of error for the given test. 
\end{theorem}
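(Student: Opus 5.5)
The approach is to show that, under assumptions 1 and 2, $\phi(x_i)$ differs from $v(i)$ only by an additive constant. A threshold test on one is then a threshold test on the other, with the threshold shifted by that constant.

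\textbf{Step 1: the characteristic function is additive.} By assumption 1 the joint pdf/pmf of the data indexed by any subset $S\subseteq{\cal N}$ factors under each hypothesis $C_j$. Hence
\[
f(\{x_k\}_{k\in S}|C_j)=\prod_{k\in S} f(x_k|C_j).
\]
With assumption 2, this gives
\[
v(S)=\ln\frac{\prod_{k\in S}f(x_k|C_1)}{\prod_{k\in S}f(x_k|C_0)}=\sum_{k\in S} v(k).
\]
Consequently, for every $S\subseteq{\cal N}/(i)$,
\[
v(S\cup(i))-v(S)=v(i).
\]
Two boundary points need care. The first is the empty set: set $v(\emptyset)=0$, the log-likelihood ratio of no data. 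The second is division by zero, handled by the convention of Theorem~\ref{Them-LRTOptimalityGeneral}, with extended values $\pm\infty$ treated consistently.

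\textbf{Step 2: the Shapley value collapses to $v(i)$.} Substitute Step 1 into (\ref{shap}):
\[
\phi(x_i)=v(i)\sum_{S\subseteq{\cal N}/(i)}\frac{|S|!(N-|S|-1)!}{N!}.
\]
Group the subsets by $s=|S|$. The weight sum becomes
\[
\sum_{s=0}^{N-1}\binom{N-1}{s}\frac{s!(N-1-s)!}{N!}=\sum_{s=0}^{N-1}\frac1N=1.
\]
So $\phi(x_i)=v(i)$ identically, for every realization of the data. This is the efficiency/additivity property of the Shapley value applied to an additive game.

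\textbf{Step 3: conclude the test equivalence.} Since $\phi(x_i)$ and $v(i)$ are the same random variable under every hypothesis, the family of tests "compare to threshold $t$" is identical for both statistics. Each $t$ yields the same decision region and the same error probability in each case. The minimizing thresholds therefore coincide, as do the optimized error probabilities and the decision regions.

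\textbf{Main obstacle.} The delicate step is making Step 1 rigorous at points where some marginal likelihood is zero in the numerator or denominator, so that $v(i)=\pm\infty$. Here the sum $\sum_k v(k)$ could produce $\infty-\infty$. I would handle this as follows. First, note that such events either have probability zero under both hypotheses or yield the same decision under both statistics. Then restrict to the set where all marginals are positive, which changes neither error probability.

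If the optimal threshold is not attained, compare infima instead. The equality of the statistics still gives equal infima.
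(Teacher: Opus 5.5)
Your proof is correct and follows the paper's route: independence makes every marginal contribution $v(S\cup(i))-v(S)$ equal to $v(i)$, so $\phi(x_i)$ is a positive constant times $v(i)$ and the optimized threshold tests coincide. Evaluating that constant as exactly $1$ and stating $v(\emptyset)=0$ explicitly are small refinements; the paper only uses $C>0$ and moves the threshold to $\tau/C$.

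Your remark on zero likelihoods is looser than the rest. If some $f(x_k|C_1)=0$ on a set of positive probability, the marginal contributions there are $\infty-\infty$ and undefined, and restricting to the set where all marginals are positive can then change the error probabilities. So that case needs an explicit convention or a positivity assumption (which the paper tacitly makes), not a restriction argument.
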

\begin{proof}
%\label{Th1}
Recall the equation for the Shapley value  of the ith sensor data is  
\begin{eqnarray}
\label{shap-eq3}
    \phi (x_i) = \sum_{S \subseteq {\cal N} / (i)} \frac{ |S|! (n - |S| - 1)!}{n!} \left( v(S \cup (i)) - v(S) \right).
\end{eqnarray}
As assumed (for sensor observations  $x_1,\ldots,x_{N}$) 
\begin{eqnarray}
    v(x_1,x_2,\ldots,x_N) = ln\left(\frac{f(x_1,x_2,\ldots,x_N|C_1)}{f(x_1,x_2,\ldots,x_N|C_0)}\right).
\end{eqnarray}
First consider  $i=N$ and $S=x_1,\ldots,x_{N-1}$ where 
%($ln(abc) = ln(a) + ln(b) + ln(c) $) 
\begin{eqnarray}
  v(S \cup (i)) - v(S) &=& \ln{\left( \frac{f(x_1, x_2, \ldots, x_N | C_1)}{f(x_1, x_2, \ldots, x_N | C_0)} \right)} \nonumber \\
  &-& \ln{\left( \frac{f(x_1, x_2, \ldots, x_{N-1} | C_1)}{f(x_1, x_2, \ldots, x_{N-1} | C_0)} \right)}.
\end{eqnarray}
We can factorize the joint distribution under either class to obtain 
%\begin{eqnarray}f(x_1, \ldots, x_N | C_k) = f(x_1, \ldots, x_{N-1} | C_k) \cdot f(x_N | C_k).\end{eqnarray}
%Thus, 
%$ v(S \cup (i))$ becomes $f(x_1, x_2, \ldots, x_N | C_k)$.
\begin{eqnarray}
  v(S \cup (i)) = \ln \left( \frac{f(x_1, x_2, \ldots, x_{N-1} | C_1) \cdot f(x_N | C_1)}{f(x_1, x_2, \ldots, x_{N-1} | C_0) \cdot f(x_N | C_0)} \right).
\end{eqnarray}
Using $\ln(abc) = \ln(a)+\ln(b)+\ln(c)$ yields 
%, thus resulting in the following equation. 
\begin{eqnarray}
%\begin{split}
  v(S \cup (i)) - v(S) &= \ln \left( \frac{f(x_1, x_2, \ldots, x_{N-1} | C_1)}{f(x_1, x_2, \ldots, x_{N-1} | C_0)} \right) \nonumber \\
  &\quad + \ln \left( \frac{f(x_N | C_1)}{f(x_N | C_0)} \right) \nonumber \\
  &\quad - \ln{\left( \frac{f(x_1, x_2, \ldots, x_{N-1} | C_1)}{f(x_1, x_2, \ldots, x_{N-1} | C_0)} \right)} \nonumber \\ 
%\end{split}
%\end{equation}
%The first and last term cancel out, leaving the following term for $v(S \cup (i)) - v(S)$. 
%\begin{equation}
%  v(S \cup (i)) - v(S) 
&=  \left( \frac{f(x_N | C_1)}{f(x_N | C_0)} \right)
=  \left( \frac{f(x_i | C_1)}{f(x_i | C_0)} \right) = v(i). 
\end{eqnarray}
In fact, we get exactly the same result for any $S$ and for any $1 \leq i \leq N$ 
so that (\ref{shap-eq3}) yields 
\begin{eqnarray}
    \phi(x_i) = {v(i)} \sum_{S \subseteq {\cal N} / (i)} \frac{ |S|! (n - |S| - 1)!}{n!},
\end{eqnarray}
for $ 1 \leq i \leq N$. 
Thus, denoting  $C =  \sum_{S \subseteq {\cal N} / (i)} \frac{ |S|! (n - |S| - 1)!}{n!} > 0$, 
a constant in $x_i$, we obtain 
%the Shapley value for a given data point $x_N$ is equivalent to the value function times the factor $K$. 
\begin{eqnarray}
    \phi(x_i) = C {v(i)}, \quad   \forall 1 \leq i \leq N. 
\end{eqnarray}
Thus comparing $\phi(x_i)$ to a optimum threshold $\tau > 0 $ is the same as comparing $ v(i)$ to $\tau / C$, which must be an optimum threshold for the test statistic $ v(i)$. 
Thus, the tests are equivalent.
%Similar conclusions hold for evaluation of any valid $S$ and $i$. 
\end{proof}

\subsection{Dependent Sensors}

We first give a theorem 
to explain in what sense the log-likelihood classifiers can be considered optimum.
%\textcolor{red}
{To be clear}, the log-likelihood ratio classifiers are optimum when we test whether all 
sensors in the 
observation vector are attacked/anomalous.  The theorem also shows the suboptimality of classifiers which make different decisions 
from  the log-likelihood ratio classifiers 
for sets that occur with nonzero probability.  
\begin{theorem}\label{Them-LRTOptimalityGeneral} Consider a classification problem
between class $C_1$ (the class where sensor observation vector ${\bf x}$ is attacked/anomalous) and $C_0$
(the class where the sensor observation vector ${\bf x}$ is not attacked/anomalous). Assume the 
pdf/pmf $ f({\bf x}|C_1) $ 
of ${\bf x}$ under class $C_1$
is known and the pdf/pmf $ f({\bf x}|C_0) $ 
of ${\bf x}$ under class $C_0$
is also known. 
Define the prior probabilities as 
$\pi_0 = Pr(C_0)$ and $ \pi_1  = Pr(C_1)$. 
Define the probability of 
error as $p_e = \pi_1 Pr(C_0|C_{1}) + \pi_0 Pr(C_{1}|C_{0})$ where 
$ Pr(C_0|C_{1}) $ denotes the probability the classifier chooses $C_0$ when $C_1$ is true and 
$Pr(C_{1}|C_{0})$ is defined similarly. For any set of $ {\bf x} $ 
occurring with nonzero probability under either $C_0$ and $C_1$, 
the classifier rule that minimizes the probability of 
error should decide for class $C_1$ (the sensor observation ${\bf x}$ is attacked/anomalous)
for all ${\bf x}$ in the set 
$\Gamma_1^{opt} = \{ {\bf x} : \pi_1
f({\bf x}|C_1) > \pi_0 f({\bf x}|C_0)  \}
$ and should decide for 
class $C_0$ otherwise, unless 
$ \pi_1
f({\bf x}|C_1) = \pi_0 f({\bf x}|C_0) $ 
in which case either decision gives the same performance. 
Further, if a classifier 
makes a different decision 
for some set of ${\bf x}$ which occurs with nonzero probability under either class $C_1$ and $C_0$  then this classifier will not be able to achieve the optimum probability of error and will achieve a larger probability of error.  For any  set of $ {\bf x} $
that occurs with zero probability under both class $C_1$ and class $C_0$,  the decision for that set of $ {\bf x} $ will not impact performance. 
The classifier described here is sometimes called a likelihood ratio classifier or a log-likelihood classifier. The log-likelihood classifier  uses an equivalent classifier that takes of the log of both sides of the comparison in $\Gamma_1^{opt}$, which changes nothing. 
\end{theorem}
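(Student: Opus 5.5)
We will prove Theorem~\ref{Them-LRTOptimalityGeneral} with the standard Bayes argument: write the probability of error as one integral over the decision region for $C_1$, then minimize that integral pointwise.

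\textbf{Step 1 (rewrite $p_e$).} Let an arbitrary deterministic classifier be described by its region $\Gamma_1$ (decide $C_1$) and the complement $\Gamma_0$. Write integrals as sums in the pmf case. Then
\begin{equation*}
p_e = \pi_1 \int_{\Gamma_0} f({\bf x}|C_1)\,d{\bf x} + \pi_0 \int_{\Gamma_1} f({\bf x}|C_0)\,d{\bf x}.
\end{equation*}
Using $\int_{\Gamma_0} f({\bf x}|C_1)\,d{\bf x} = 1 - \int_{\Gamma_1} f({\bf x}|C_1)\,d{\bf x}$, this becomes
\begin{equation*}
p_e = \pi_1 - \int_{\Gamma_1} g({\bf x})\,d{\bf x}, \qquad g({\bf x}) = \pi_1 f({\bf x}|C_1) - \pi_0 f({\bf x}|C_0).
\end{equation*}

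\textbf{Step 2 (optimal region).} Minimizing $p_e$ is the same as maximizing $\int_{\Gamma_1} g$. This integral is maximized by including exactly the points where $g>0$, which gives $\Gamma_1^{opt}$. Points where $g=0$ contribute nothing, so either decision there gives the same $p_e$. This settles the first claims of the theorem.

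\textbf{Step 3 (strict suboptimality).} Take any other region $\Gamma_1$. A direct computation gives
\begin{equation*}
p_e(\Gamma_1) - p_e(\Gamma_1^{opt}) = \int_{\Gamma_1^{opt}\setminus \Gamma_1} g \;+\; \int_{\Gamma_1 \setminus \Gamma_1^{opt}} (-g).
\end{equation*}
Both integrands are nonnegative. Suppose the disagreement set $D$ has positive probability under $C_0$ or $C_1$, and excludes tie points where $g=0$. Then $g \neq 0$ on $D$. A measure-theoretic lemma then shows the integral of $|g|$ over $D$ is strictly positive. Indeed, if $\int_D |g| = 0$, then $g=0$ almost everywhere on $D$ with respect to Lebesgue or counting measure. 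That contradicts $g\neq 0$ on a set of positive $f(\cdot|C_j)$-measure, since such a set must have positive Lebesgue or counting measure.

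\textbf{Step 4 (zero-probability sets).} If a set $A$ has zero probability under both classes, then $f({\bf x}|C_0)=f({\bf x}|C_1)=0$ almost everywhere on $A$. Hence $g=0$ almost everywhere on $A$, and changing decisions on $A$ leaves $p_e$ unchanged.

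\textbf{Step 5 (log form).} The logarithm is strictly increasing, so taking logs of both sides of the comparison defining $\Gamma_1^{opt}$ gives the same region wherever both sides are positive. Where $f({\bf x}|C_0)=0 < f({\bf x}|C_1)$, the point is assigned to $C_1$; this matches the footnoted convention of avoiding division by zero.

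\textbf{Main obstacle.} The hard part is stating the strictness in Step 3 correctly. A set that has positive probability but on which the rule disagrees only at tie points $g=0$ gives no strict loss. The statement must therefore be read as applying to disagreements on $\{g\neq 0\}$, and the proof will state this explicitly.
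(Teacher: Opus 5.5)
Your proposal is correct and follows the paper's own proof: the paper likewise writes $p_e = \pi_1 + \int_{\Gamma_1}[\pi_0 f({\bf x}|C_0) - \pi_1 f({\bf x}|C_1)]\,d{\bf x}$ and minimizes pointwise by putting into $\Gamma_1$ exactly the points where the integrand is negative. Your Step 3 (the explicit difference formula plus the positive-measure argument) makes precise the strictness claim that the paper only argues informally. Your caveat about tie points matches the exception for $\pi_1 f({\bf x}|C_1) = \pi_0 f({\bf x}|C_0)$ that the paper's proof already carves out.
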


\begin{proof} 
Let $ \Gamma_1 $ be the set $ {\bf x}$ where the classifier decides for  $C_1$ and $ \Gamma_0 $ be the set $ {\bf x}$ where the classifier decides for $C_0$. 
The probability of error $p_e$ can be expressed as
\begin{equation}
p_e = \pi_0 \int_{\Gamma_1} f({\bf x}|C_0) d{\bf x} + \pi_1 \int_{\Gamma_0} f({\bf x}|C_1) d{\bf x}.
\end{equation}
By using  $\int_{\Gamma_0} f({\bf x}|C_i) d{\bf x} = 1 - \int_{\Gamma_1} f({\bf x}|C_i) d{\bf x}$, the probability of error becomes 
\begin{equation}
p_e = \pi_1 + \int_{\Gamma_1} \left[ \pi_0 f({\bf x}|C_0) - \pi_1 f({\bf x}|C_1) \right] d{\bf x}.
\label{eq2} 
\end{equation}
To minimize $p_e$, the decision region $\Gamma_1$ must include all regions of ${\bf x}$ where the integrand is negative and exclude all regions where it is positive, assuming these regions of $ {\bf x} $ occur with nonzero probability under either $C_0$ or $C_1$. From 
(\ref{eq2}), 
picking $\Gamma_1$ this way will include all the possible regions of ${\bf x}$ in the integral  that will decrease $p_e$ and excludes 
all the possible regions of ${\bf x}$  that will increase $p_e$.   
Thus, the minimum probability of error is achieved if and only if
\begin{equation}
\Gamma_1 = \Gamma_1^{opt} = \{ {\bf x} : \pi_1 f({\bf x}|C_1) > \pi_0 f({\bf x}|C_0) \}
\end{equation}
and all other $ {\bf x} $ should be assigned to $\Gamma_0$, 
except for $ {\bf x} $ that occur with zero probability under both classes and $ {\bf x} $ for which 
$ \pi_1 f({\bf x}|C_1) = \pi_0 f({\bf x}|C_0) $. For these sets of $ {\bf x} $, we can make either decision with no impact on performance from 
(\ref{eq2}). 
This confirms that the optimal test is always a likelihood ratio test.

Similarly, if any region of $ {\bf x} $, which occurs with nonzero probability under either $C_0$ or $C_1$, is assigned differently, (\ref{eq2}) and the discussion after it explains why the  probability of error will be larger that the minimum possible value. 
%Note that the log-likelihood ratio classifier, which takes the log of both sides of the likelihood ratio classifier, is equivalent to the likelihood ratio classifier. 
\end{proof}

It is important to note that, as considered later, if we want to know 
if only one sensor in a larger vector of  observed statistically dependent sensors is  attacked/anomalous and besides the data from the one sensor under test we are given other observed data statistically dependent with it, then Theorem~\ref{Them-LRTOptimalityGeneral} does not describe the optimum test.  So we will investigate this question further later.  First we concern ourselves with whether the Shapley test using statistically dependent $x_1,x_2$ and the test using $v(1)$ can ever be the same for some common statistically dependent observation cases.  We next show some cases where this is not possible, in opposition to 
Theorem~\ref{Th1}. 

\subsubsection{Shapley and $v(1)$ Tests are Different for some pdfs}

%{\color{red} You need to move the bivariate pdf (35) here to introduce it.
In the next theorem, we consider the most popular bivariate pdf, the bivariate Gaussian pdf  
\begin{equation}
{f(x_1, x_2) = \frac{1}{2\pi\sigma_1\sigma_2\sqrt{1-\rho^2}} \exp\left( -\frac{\zeta}{2(1-\rho^2)} \right)},
\label{BG}
\end{equation}
{where $\zeta = (\frac{x_1-\mu_1}{\sigma_1})^2 + (\frac{x_2-\mu_2}{\sigma_2})^2 - 2\rho(\frac{x_1-\mu_1}{\sigma_1})(\frac{x_2-\mu_2}{\sigma_2})$,}
the mean vector is $(x_1,x_2)^T$, the standard deviation vector is $(\sigma_1,\sigma_2)^T$ and $-1 < \rho < 1 $ is the correlation coefficient.  The following theorem specifically shows that  the Shapley test is different from the test using $v(1)$ for a constant nonzero attack/anomaly at sensor 1 when $\rho \neq 0$ in (\ref{BG}), which corresponds to statistically dependent observation cases for the bivariate Gaussian pdf.  

\begin{theorem} \label{Them-ShapleySuboptimalityGaussian} Let the two sensor unattacked/nonanomalous sensor data follow a bivariate Gaussian distribution under class $C_0$ and assume a constant additive nonzero attack/anomaly $A$, which we want to detect the presence of, is added to the first sensor observation under class $C_1$. 

Let 
$[x_1, x_2]^T$ denote the possibly attacked/anomalous sensor observations to which we will apply the classifier. Let the classifier decide if there is an attack/anomaly at sensor 1 by employing a log-likelihood ratio classifier at sensor 1 so that the test statistic compares 
\begin{equation} 
v(1) = \ln \frac{f(x_1 | C_1)}{f(x_1 | C_0)} = \frac{A}{\sigma_1^2}(x_1 - \mu_1) - \frac{A^2}{2\sigma_1^2}. \label{T3.3v1}\end{equation}
with an optimum threshold. 
Then the test using the Shapley function must be different from the test using $v(1)$ if both $A$ and $\rho$ are nonzero\footnote{We always assume $ 0 < \sigma_1, \sigma_2 < \infty $.}.  
\end{theorem}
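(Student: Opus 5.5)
We will compute the Shapley value explicitly for $N=2$, $i=1$, and show it is a function of $(x_1,x_2)$ that genuinely depends on $x_2$. Its decision region therefore cannot coincide with the region of the $v(1)$ test, which is a half-plane bounded by a vertical line in the $(x_1,x_2)$ plane.

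With $N=2$, (\ref{shap}) gives
\[
\phi(x_1)=\tfrac12 v(1)+\tfrac12\bigl(v(1,2)-v(2)\bigr).
\]
Each term is obtained as follows.

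\emph{Term $v(1)$.} This is (\ref{T3.3v1}).

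\emph{Term $v(2)$.} Under $C_1$ only $x_1$ is shifted by $A$, so the marginal of $x_2$ is the same under both classes. Hence $v(2)=\ln 1=0$.

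\emph{Term $v(1,2)$.} Under $C_1$ the mean vector becomes $(\mu_1+A,\mu_2)^T$. Expanding $\zeta$ in (\ref{BG}) for both classes and subtracting, the quadratic terms cancel and we get a function affine in $(x_1,x_2)$:
\[
v(1,2)=\frac{A}{(1-\rho^2)\sigma_1}\Bigl[\frac{x_1-\mu_1}{\sigma_1}-\rho\frac{x_2-\mu_2}{\sigma_2}\Bigr]-\frac{A^2}{2(1-\rho^2)\sigma_1^2}.
\]
Consequently
\[
\phi(x_1)=a x_1+b x_2+c,\qquad b=-\frac{A\rho}{2(1-\rho^2)\sigma_1\sigma_2},
\]
where $a=\frac{A}{2\sigma_1^2}\bigl(1+\frac1{1-\rho^2}\bigr)\neq 0$ and $c$ is a constant. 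The key fact is that $b\neq0$ exactly when $A\neq0$ and $\rho\neq0$.

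Next we compare the decision regions. For any threshold $\tau$, the Shapley test decides $C_1$ on the half-plane $\{a x_1+b x_2+c>\tau\}$, whose boundary is a line of slope $-a/b$, finite and nonzero. The $v(1)$ test with any threshold decides $C_1$ on a half-plane $\{x_1>t\}$ (or $x_1<t$ if $A<0$), whose boundary is a vertical line. Two such half-planes with non-parallel boundaries differ on a set of positive Lebesgue measure; indeed their symmetric difference contains a wedge. The bivariate Gaussian has a positive density everywhere under both classes (since $|\rho|<1$), so that set has nonzero probability under both $C_0$ and $C_1$. Hence, whatever thresholds are used, optimized or not, the two tests make different decisions on a set of positive probability, and so they are different tests.

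To assert that the error probabilities also differ, I would add a further argument. The problem of deciding whether sensor 1 is attacked using the observation $(x_1,x_2)$, with $x_2$ available, has an optimal rule by Theorem~\ref{Them-LRTOptimalityGeneral}: a threshold test on $v(1,2)$, which is the vertical/oblique line with slope parameter $\rho$. That observation alone does not separate the two tests' error probabilities, so the statement is best read as being about different decision regions. For error probabilities I would compute $p_e$ of each optimized linear test via Gaussian $Q$-functions, $p_e=Q(d/2)$-type expressions with an effective deflection $d$, and compare the deflections; these are equal only when $b=0$.

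The main obstacle is this last step. It requires checking that the deflection of $a x_1+b x_2$ differs from that of $x_1$, and I expect it reduces to an inequality in $\rho$ that holds strictly for $\rho\neq0$. The decision-region difference itself is routine.
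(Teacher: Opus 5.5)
For the statement as written (the two tests are different), your argument is correct and is essentially the paper's. You compute the $N=2$ Shapley value using $v(2)=0$ and the affine $v(1,2)$. You then isolate the $x_2$-coefficient $-A\rho/(2(1-\rho^2)\sigma_1\sigma_2)\neq 0$ and contrast the resulting oblique boundary with the vertical boundary of the $v(1)$ test. Full Gaussian support then gives a difference on a set of positive probability. Your wedge remark only makes that last step more explicit than the paper does.

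Where you depart from the paper is the error probabilities. The paper's proof infers that the error probabilities differ directly from the fact that the decision regions differ on a set of nonzero probability. As you point out, Theorem~\ref{Them-LRTOptimalityGeneral} supports that inference only when one of the two tests is the optimal rule. Here neither is, since the optimal rule thresholds $v(1,2)$.

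Your deflection plan closes this gap, and the inequality you expected can be written out. Set $z_i=(x_i-\mu_i)/\sigma_i$ and $\alpha=A/\sigma_1$. Then $\phi(x_1)$ equals $\frac{\alpha}{2(1-\rho^2)}T$ plus a constant, where $T=(2-\rho^2)z_1-\rho z_2$. The mean of $T$ shifts by $(2-\rho^2)\alpha$ between the classes. Its variance under both classes is $(2-\rho^2)^2+\rho^2-2\rho^2(2-\rho^2)=(1-\rho^2)(4-3\rho^2)$. The $v(1)$ test has deflection $|\alpha|$. Hence
\[
\frac{d_\phi^2}{d_v^2}=\frac{(2-\rho^2)^2}{(1-\rho^2)(4-3\rho^2)}=1+\frac{\rho^2(3-2\rho^2)}{(1-\rho^2)(4-3\rho^2)}>1 \quad \text{for } \rho\neq 0 .
\]
For a common-variance Gaussian mean shift with priors in $(0,1)$, the minimum error over thresholds is strictly decreasing in the deflection. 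Therefore $P_{e,\phi}<P_{e,v}$. This gives an ordering, which is more than the paper claims in this theorem.

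Note also that the ratio depends on $\rho$ only through $\rho^2$. Equivalently, the map $x_2\mapsto -x_2$ turns the $\rho$ problem into the $-\rho$ problem while leaving $\phi(x_1)$ and $v(1)$ unchanged. So the Shapley test is strictly better for negative $\rho$ as well. This conflicts with part 2 of Theorem~\ref{Them-ShapleyBetterOrInferior}, whose proof assumes $b>0$ although $b<0$ whenever $A\rho<0$.
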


\begin{proof} Let $\Gamma_1^{\phi} = \{ [x_1, x_2]^T : \phi(x_1) > \tau^s \}$ and 
$\Gamma_1^{v} = \{ [x_1, x_2]^T : v(1) > \tau^v \}$
where $\tau^s$ and $\tau^v$ are the optimum thresholds (minimum error probability) for the two  tests. The boundary of $\Gamma_1^{v}$ is defined by $v(1) = \tau^v$. So, from (\ref{T3.3v1}), it is the line of all $[x_1,x_2]$ pairs satisfying  
\begin{equation} 
x_1 = \mu_1 + \frac{\sigma_1^2}{A}(\tau^v + \frac{A^2}{2\sigma_1^2}) \triangleq \kappa,  \mbox{ and }
-\infty \leq x_2 \leq \infty.
\label{t3.3G1}\end{equation}   
{Since $x_2$ is not attacked/anomalous, we find 
%$v_2$ and $v(1,2)$  denoted as
\begin{equation}
v(2) = \ln \frac{f(x_2 | C_1)}{f(x_2 | C_0)} = 0 \label{v2}\end{equation}
(since $ {f(x_2 | C_1)}={f(x_2 | C_0)} $ due to no attack/anomaly at $x_2$) and
\begin{equation}
\begin{aligned}
 v(1,2) &= \ln \frac{f(x_1, x_2 | C_1)}{f(x_1, x_2 | C_0)} \\
 &= \frac{A}{(1-\rho^2)\sigma_1^2}(x_1-\mu_1) \\
 &\quad - \frac{A\rho}{(1-\rho^2)\sigma_1\sigma_2}(x_2-\mu_2) - \frac{A^2}{2(1-\rho^2)\sigma_1^2}. 
\end{aligned}
\label{v2}
\end{equation}}
Using (\ref{shap}) for the two dimensional case considered yields 
\begin{equation} 
\phi(x_1) = (v(1) + v(1,2) - v(2))/2,  
\end{equation} 
so that 
\begin{equation}
\begin{aligned}
\phi(x_1) = & \frac{A(2-\rho^2)}{2(1-\rho^2)\sigma_1^2}(x_1-\mu_1) - \frac{A\rho}{2(1-\rho^2)\sigma_1\sigma_2}(x_2-\mu_2) \\
& - \frac{A^2(2-\rho^2)}{4(1-\rho^2)\sigma_1^2}.
\end{aligned}
\label{eq18} 
\end{equation}
The boundary of 
$\Gamma_1^{\phi}$  is defined by $\phi(x_1) = \tau^s$, which becomes 
%.  Rearranging the expression for $\phi(x_1)$, we obtain the boundary equation 
\begin{equation}
\begin{aligned}
& \frac{A(2-\rho^2)}{2(1-\rho^2)\sigma_1^2}(x_1-\mu_1) - \frac{A\rho}{2(1-\rho^2)\sigma_1\sigma_2}(x_2-\mu_2) \\
& = \tau^s + \frac{A^2(2-\rho^2)}{4(1-\rho^2)\sigma_1^2}.
\end{aligned}
\label{t3.3Gphi} 
\end{equation}
This is a linear equation of the form $a x_1 + b x_2 = c$. For the regions $\Gamma_1^{\phi}$ 
and $\Gamma_1^{v}$ to be identical, the boundaries must coincide for all $x_1, x_2$. This requires that (\ref{t3.3Gphi}) should not 
depend on $x_2$, like  (\ref{t3.3G1}),  
so the coefficient multiplying $x_2$, $\frac{A\rho}{2(1-\rho^2)\sigma_1\sigma_2}$, must be zero, which is impossible under our stated assumptions 
$A,\rho \neq 0$.  Thus, $\Gamma_1^{\phi} \neq \Gamma_1^{v}$ for all cases where the sensor data are correlated ($\rho \neq 0$). %From  Theorem~\ref{Them-LRTOptimalityGeneral}, $ \Gamma_1^{v} = \Gamma_1^{opt}$ is the optimum decision region to minimize the probability of error. 
%The optimal threshold is defined as $\tau^v = \ln \frac{\pi_0}{\pi_1}$. 
Since $\Gamma_1^{\phi} \neq \Gamma_1^{v}$, there exists a set of observations $\mathcal{M} = (\Gamma_1^{\phi} \setminus \Gamma_1^{v}) \cup (\Gamma_1^{v} \setminus \Gamma_1^{\phi})$ that occur with nonzero probability (as bivariate Gaussian densities have support over $\mathbb{R}^2$) where the two decision regions $\Gamma_1^{\phi}$ and $\Gamma_1^{v}$ make different decisions. 
Due to this, the two tests will 
result in different probability of errors.\end{proof} 

Next, we present a theorem demonstrating that the Shapley value test is also different from the test using $v(1)$ when the observations follow what has been called a bivariate Laplacian distribution subject to a constant additive attack/anomaly at the first sensor. The two tests must also yield different error probabilities.

\begin{theorem} \label{Them-ShapleySuboptimalityBetaGaussian} 
Let the two sensor unattacked/nonanomalous sensor data $\mathbf{x} = [x_1, x_2]^T$ follow what has been called a {Bivariate Laplacian} pdf (a special case of elliptically
symmetric pdf)~\cite{6530654,gomez1998multivariate, johnson1987multivariate} under class $C_0$ 
given by
\begin{equation}
\begin{split}
\label{LaplaceG}
f(\mathbf{x} | C_0) = & {\frac{0.5 \, \Gamma(p/2)}{\pi^{p/2} \, \Gamma(p) \, 2^p} \frac{1}{\det(\boldsymbol{\Sigma})^{1/2}}} \\
& {\exp\left( -\frac{1}{2} \sqrt{\mathbf{x}^T \boldsymbol{\Sigma}^{-1} \mathbf{x}} \right)}
\end{split}
\end{equation}

where 
%$\boldsymbol{\Sigma}$ is defined as 
\begin{equation}
\boldsymbol{\Sigma} = \begin{pmatrix} 
\sigma_1^2 & \rho \sigma_1 \sigma_2 \\ 
\rho \sigma_1 \sigma_2 & \sigma_2^2 
\end{pmatrix}
\label{Sigmadef}
\end{equation}
with $\rho \in (-1, 1)$.

Assume a constant additive nonzero attack/anomaly  $A$ is added to $x_1$ (no attack/anomaly for $x_2$) under class $C_1$. Let the classifier to decide if there is an attack/anomaly at sensor 1 employ a log-likelihood ratio classifier where the test statistic 
\begin{equation}
\label{v1inT4} 
v(1) = \ln \left( \frac{\int_{-\infty}^{\infty} f(\mathbf{x} - \mathbf{a} | C_0) \, dx_2}{{\int_{-\infty}^{\infty} f(\mathbf{x} | C_0) \, dx_2}} \right)
\end{equation}
is compared to an optimum 
threshold $\tau^v$, where 
$\mathbf{a} = [A, 0]^T$. Assume $A,\rho \neq 0$. 
Then the Shapley function test is different from the test using $v(1)$ for deciding if the attack/anomaly occurs at sensor 1. 
The two tests also have different probability of errors. 
\end{theorem}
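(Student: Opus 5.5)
The plan mirrors the proof of Theorem~\ref{Them-ShapleySuboptimalityGaussian}. The difference is that here the boundaries are no longer straight lines, so instead of comparing line coefficients I will compare how the two statistics behave along vertical lines $x_1=\mbox{const}$.

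\textbf{Setting up the two statistics.} The statistic $v(1)$ in (\ref{v1inT4}) is a function of $x_1$ alone. Hence $\Gamma_1^{v}=\{ {\bf x}: v(1)>\tau^v\}$ is a cylinder $S\times\mathbb{R}$ for some set $S\subset\mathbb{R}$. Under $C_1$ the marginal of $x_2$ is unchanged, so $v(2)=0$, exactly as in (\ref{v2}). Writing $q({\bf x})={\bf x}^T\boldsymbol{\Sigma}^{-1}{\bf x}$, the normalizing constants in (\ref{LaplaceG}) cancel in the joint log-likelihood ratio, giving
\begin{equation*}
v(1,2)=\tfrac{1}{2}\left(\sqrt{q({\bf x})}-\sqrt{q({\bf x}-{\bf a})}\right)\triangleq g(x_1,x_2).
\end{equation*}
Therefore $\phi(x_1)=\tfrac12 v(1)+\tfrac12 g(x_1,x_2)$. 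It suffices to show that, for a set of $x_1$ of positive Lebesgue measure, the slice $\{x_2:\phi(x_1)>\tau^s\}$ is neither (almost) all of $\mathbb{R}$ nor (almost) empty. Given such a set, $\Gamma_1^{\phi}\,\triangle\,\Gamma_1^{v}$ has positive area. The density (\ref{LaplaceG}) is strictly positive everywhere, so this set has nonzero probability under both classes. The argument at the end of the proof of Theorem~\ref{Them-ShapleySuboptimalityGaussian} then shows the tests differ, and since both use optimized thresholds they yield different error probabilities.

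\textbf{Key asymptotic step.} Fix $x_1$ and let $x_2\to\pm\infty$. Expanding
\begin{equation*}
q({\bf x})=\frac{1}{1-\rho^2}\left(\frac{x_1^2}{\sigma_1^2}-\frac{2\rho x_1x_2}{\sigma_1\sigma_2}+\frac{x_2^2}{\sigma_2^2}\right)
\end{equation*}
gives
\begin{equation*}
\sqrt{q({\bf x})}=\frac{|x_2|}{\sigma_2\sqrt{1-\rho^2}}-\frac{\rho\,x_1\,\mathrm{sgn}(x_2)}{\sigma_1\sqrt{1-\rho^2}}+o(1).
\end{equation*}
Replacing $x_1$ by $x_1-A$ and subtracting, I expect
\begin{equation*}
g(x_1,x_2)\to L_\pm=\mp\frac{\rho A}{2\sigma_1\sqrt{1-\rho^2}} \quad \mbox{as } x_2\to\pm\infty .
\end{equation*}
The two limits differ whenever $\rho A\neq 0$. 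Moreover $g$ is continuous in $x_2$, so along each vertical line $\phi$ sweeps continuously between $\tfrac12(v(1)+L_-)$ and $\tfrac12(v(1)+L_+)$.

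\textbf{Producing a positive-measure set of bad slices.} Set $h(x_1)=v(1)-2\tau^s$; this is continuous by dominated convergence in the integrals in (\ref{v1inT4}). Whenever $-\max(L_\pm)<h(x_1)<-\min(L_\pm)$, the slice at $x_1$ contains a half-line of $x_2$ where $\phi>\tau^s$ and a half-line where $\phi<\tau^s$. Because $h$ is continuous, this open condition holds on an open, hence positive-measure, set of $x_1$ as soon as it holds at a single point.

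\textbf{Main obstacle.} The hard step is guaranteeing such a point exists. One must rule out that the optimal $\tau^s$ sits where $h$ never enters the window $(-\max L_\pm,\,-\min L_\pm)$. First I would use the known tail behaviour of the Laplacian marginal: it decays like $e^{-|x_1|/(2\sigma_1)}$, up to polynomial factors from the Bessel-type integral. This should make $v(1)$ continuous with limits $\pm A/(2\sigma_1)$ as $x_1\to\pm\infty$, so $h$ sweeps an interval. Second, I would argue that an optimal $\tau^s$ cannot lie outside that range. Otherwise $\Gamma_1^\phi$ would be contained in $\{g>c\}$ or its complement, with $c$ determined by $\tau^s$ and the relevant limit of $v(1)$. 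Such a region makes the same decision on entire half-lines for every $x_1$. Using Theorem~\ref{Them-LRTOptimalityGeneral}-style reasoning with (\ref{eq2}), moving $\tau^s$ into the range should strictly lower $p_e$. This reduces to the case where $h$ crosses the window. If this optimality argument becomes messy, a fallback is to note that a degenerate optimal $\tau^s$ yields a region not of the cylinder form $S\times\mathbb{R}$, since $\{g>c\}$ depends on $x_2$. That would again give a different decision region of positive probability, and hence a different error probability.
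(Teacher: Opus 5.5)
\textbf{There is a genuine gap.} Your route differs from the paper's, and on the part it completes it is more careful. The paper only shows that $\Delta(0,x_2)$ is nonconstant in $x_2$, by coefficient matching on $x_2\ge 0$, and then asserts that $\{\phi(x_1)>\tau^s\}$ cannot coincide with a cylinder. Your limits $L_\pm=\mp\rho A/(2\sigma_1\sqrt{1-\rho^2})$, together with the half-line slicing argument, actually prove non-cylindricity once $h$ enters the window. Those limits are correct. So are your limits $\pm A/(2\sigma_1)$ for $v(1)$: the $x_1$-marginal is proportional to $sK_1(s/2)$ with $s=|x_1|/\sigma_1$.

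However, the step you flag as the main obstacle is a real gap, and the resolution you sketch is false in general, because nothing fixes $\pi_0,\pi_1$. Write $\|\mathbf{y}\|=\sqrt{\mathbf{y}^T\boldsymbol{\Sigma}^{-1}\mathbf{y}}$. Then $g=\frac12(\|\mathbf{x}\|-\|\mathbf{x}-\mathbf{a}\|)$, so the triangle inequality gives $|g|\le G\triangleq\frac12\|\mathbf{a}\|=|A|/(2\sigma_1\sqrt{1-\rho^2})$. If $\ln(\pi_0/\pi_1)\ge G$, the integrand in (\ref{eq2}) is positive almost everywhere. Both optimized tests are then ``always decide $C_0$'' and coincide, with $p_e=\pi_1$. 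In that regime moving $\tau^s$ inward does not lower $p_e$. Your fallback also fails there, because the degenerate region is empty and hence trivially of the form $S\times\mathbb{R}$. So a hypothesis on the priors is needed, and the statement itself fails without one.

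Under $\pi_0=\pi_1$ the step closes. Let $M=|A|/(2\sigma_1)=\sup|v(1)|$. For $\tau>M/2$ you get $\{\phi>\tau\}\subseteq\{g>2\tau-M\}\subseteq\{g>0\}$, where the integrand of (\ref{eq2}) is negative. Hence lowering $\tau$ to $M/2$ strictly lowers $p_e$; the added set is open and nonempty because $\sup\phi=(M+G)/2>M/2$. The case $\tau<-M/2$ is symmetric. So $|2\tau^s|\le M$, and since $v(1)\to\pm M$, $h$ enters $(-L,L)$ with $L=|\rho A|/(2\sigma_1\sqrt{1-\rho^2})$. The same argument works whenever $|\ln(\pi_0/\pi_1)|<L$.

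The final inference is also unjustified: regions that differ on a set of positive probability need not give different error probabilities. You take this step from the end of the proof of Theorem~\ref{Them-ShapleySuboptimalityGaussian}. The paper's proof of this theorem makes the same leap, so neither can carry it. Theorem~\ref{Them-LRTOptimalityGeneral} gives strict inferiority only relative to the full-data region $\{g>\ln(\pi_0/\pi_1)\}$. $\Gamma_1^v$ is not that region, and in general neither is $\Gamma_1^\phi$. For example, with $\pi_0=\pi_1$, $\phi=v(1)/2$ is nonconstant on the boundary line $x_1-\rho\sigma_1x_2/\sigma_2=A/2$ of $\{g>0\}$.

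Two distinct suboptimal regions can have exactly equal $p_e$. Saying ``both thresholds are optimized'' does not help, since they are optimized over different one-parameter families. To conclude $P_{e,\phi}\ne P_{e,v}$ you need a genuine quantitative comparison of the two minimized error probabilities. As it stands, your argument proves only that the decision regions differ, and only under a condition on the priors.
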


\begin{proof}
First, the employed test statistic in (\ref{v1inT4}) will only depend on $x_1$, since $x_2$ is integrated out.  Thus, as in Theorem~\ref{Them-ShapleySuboptimalityGaussian}, we 
must have the region $\Gamma_1^{v} = \{ [x_1, x_2]^T : v(1) > \tau^v \}$ which must have a boundary defined by $v(1) = \tau^v$. So the boundary is 
defined by the value of $x_1$ (similar to Theorem~\ref{Them-ShapleySuboptimalityGaussian}). 

 Note that since $x_2$ is not attacked/anomalous (similar to Theorem~\ref{Them-ShapleySuboptimalityGaussian})
\begin{equation}
\label{v2inT4} 
\begin{split}
v(2) &= \ln \left( \frac{{\int_{-\infty}^{\infty} f(\mathbf{x} | C_1) \, dx_2}}{{\int_{-\infty}^{\infty} f(\mathbf{x} | C_0) \, dx_2}} \right) = 0
\end{split}
\end{equation}For the optimum two dimensional classifier we find 
{\begin{equation}
\label{v12th3}
\begin{split}
v(1,2) &= \ln \frac{f(\mathbf{x} - \mathbf{a} | C_0)}{f(\mathbf{x} | C_0)} \\
&= \frac{1}{2} \left[ \sqrt{\mathbf{x}^T \boldsymbol{\Sigma}^{-1} \mathbf{x}} - \sqrt{(\mathbf{x}-\mathbf{a})^T \boldsymbol{\Sigma}^{-1} (\mathbf{x}-\mathbf{a})} \right]. 
\end{split}
\end{equation}}
Using (\ref{shap}) for the two dimensional case considered yields 
{\begin{eqnarray} 
\phi(x_1) &=& \frac{v(1) + v(1,2) - v(2)}{2} \nonumber \\ 
&=& \frac{1}{2} \ln \left( \frac{\int_{-\infty}^{\infty} f(\mathbf{x} - \mathbf{a} | C_0) \, dx_2}{\int_{-\infty}^{\infty} f(\mathbf{x} | C_0) \, dx_2} \right) \nonumber \\
&+& \frac{1}{4} \left[ \sqrt{\mathbf{x}^T \boldsymbol{\Sigma}^{-1} \mathbf{x}} - \sqrt{(\mathbf{x}-\mathbf{a})^T \boldsymbol{\Sigma}^{-1} (\mathbf{x}-\mathbf{a})} \right]
\end{eqnarray}
The first term in $\phi_1(\mathbf{x})$ is exactly one-half of the marginal log-likelihood ratio $v(1)$, which depends only on $x_1$ as $x_2$ is integrated out. Consequently, any dependence of $\phi_1(\mathbf{x})$ on $x_2$ must arise from the square-root difference term denoted as
\begin{equation}
\Delta(x_1, x_2) \triangleq \sqrt{\mathbf{x}^T \boldsymbol{\Sigma}^{-1} \mathbf{x}} - \sqrt{(\mathbf{x}-\mathbf{a})^T \boldsymbol{\Sigma}^{-1} (\mathbf{x}-\mathbf{a})}.
\label{delta}
\end{equation}

To show that $\phi_1(\mathbf{x})$ depends on $x_2$, we provide a proof by contradiction. Assume that 
%for some fixed $x_1$, 
$\Delta(x_1, x_2)$ is not a function of $x_2$. If this were true, it must also hold for the specific case $x_1 = 0$. Substituting $x_1 = 0$ and $\mathbf{a} = [A, 0]^T$, and noting that 
$\boldsymbol{\Sigma}^{-1} = \frac{1}{(1-\rho^2)} \begin{pmatrix} \sigma_1^{-2} & -\rho(\sigma_1\sigma_2)^{-1} \\ -\rho(\sigma_1\sigma_2)^{-1} & \sigma_2^{-2} \end{pmatrix}$, we have 
\begin{equation}
\Delta(0, x_2) = \frac{1}{\sqrt{1-\rho^2}} \left[ \frac{|x_2|}{\sigma_2} - \sqrt{\frac{A^2}{\sigma_1^2} + \frac{2\rho A}{\sigma_1 \sigma_2}x_2 + \frac{x_2^2}{\sigma_2^2}} \right].
\end{equation}
If $\Delta(0, x_2)$ is not a function of $x_2$, there must exist a constant $C$ such that for all $x_2 \ge 0$
\begin{equation}
\frac{x_2}{\sigma_2} - \sqrt{\frac{A^2}{\sigma_1^2} + \frac{2\rho A}{\sigma_1 \sigma_2}x_2 + \frac{x_2^2}{\sigma_2^2}} = C.
\end{equation}
Rearranging and squaring both sides yields
\begin{equation}
\frac{A^2}{\sigma_1^2} + \frac{2\rho A}{\sigma_1 \sigma_2}x_2 + \frac{x_2^2}{\sigma_2^2} = \left( \frac{x_2}{\sigma_2} - C \right)^2 = \frac{x_2^2}{\sigma_2^2} - \frac{2C}{\sigma_2}x_2 + C^2.
\end{equation}
Matching the coefficients of $x_2$ on both sides requires $C = -\frac{\rho A}{\sigma_1}$. Substituting this into the constant term gives
\begin{equation}
\frac{A^2}{\sigma_1^2} = C^2 = \frac{\rho^2 A^2}{\sigma_1^2}.
\end{equation}
For $A \neq 0$, this identity implies $\rho^2 = 1$, which contradicts the assumption that $\rho \in (-1, 1)$. Thus, the assumption that $\Delta(x_1, x_2)$ is not a function of $x_2$ is false.

It follows that $\phi_1(\mathbf{x})$ depends on both $x_1$ and $x_2$ so that the boundary of the decision region $\Gamma_1^{\phi} = \{ [x_1, x_2]^T : \phi_1(\mathbf{x}) > \tau^s \}$ must depend on both $x_1$ and $x_2$ and so this region must be different from the region $\Gamma_1^{v}$  on sets which occur with nonzero probability, since our pdfs have support over the whole two dimensional space.  Thus the decisions made by the Shapley function test are different from those made by the $v(1)$ test and the two tests will have different error %\textcolor{red} 
{probabilities}.

}

\end{proof}
}

Finally, we establish a theorem demonstrating that the Shapley value test must be different from the test using $v(1)$ for bivariate Gaussian observations subject to a random Gaussian attack/anomaly at sensor 1. 
{\begin{theorem}\label{Them-ShapleySuboptimalityGaussianRandom}Consider the case where the unattacked/nonanomalous  (class $C_0$) sensor data $\mathbf{x} = [x_1, x_2]^T$ follows a bivariate Gaussian distribution with zero mean vector and covariance matrix 
$\boldsymbol{\Sigma}$
defined in (\ref{Sigmadef}) . 
{Under $C_1$, assume a two dimensional random additive attack/anomaly vector $\mathbf{a}=[A,0]^T$ is added to the unattacked/nonanomalous vector, where $A \sim \mathcal{N}(\mu_A, \sigma_A^2)$. Thus the random Gaussian attack/anomaly is applied only to $x_1$.}  
Then the Shapley function test using $\phi(x_1)$ must be different from the test using $v(1)$ if $\rho \neq 0$, regardless of the value of $\mu_A$. The two tests will achieve different error {probabilities}.  
\end{theorem}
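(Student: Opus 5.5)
The plan is to compute the three value functions $v(1)$, $v(2)$ and $v(1,2)$ explicitly, exactly as in the proof of Theorem~\ref{Them-ShapleySuboptimalityGaussian}. We then show that $\phi(x_1)=(v(1)+v(1,2)-v(2))/2$ genuinely depends on $x_2$ whenever $\rho\neq 0$, while $v(1)$ depends only on $x_1$.

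\textbf{Step 1: the three value functions.} Under $C_1$ the observation is Gaussian: $\mathbf{x}\sim\mathcal{N}(\mu_A\mathbf{e}_1,\boldsymbol{\Sigma}_1)$ with $\boldsymbol{\Sigma}_1=\boldsymbol{\Sigma}+\sigma_A^2\mathbf{e}_1\mathbf{e}_1^T$ and $\mathbf{e}_1=[1,0]^T$.
\begin{itemize}
\item The marginal of $x_2$ is $\mathcal{N}(0,\sigma_2^2)$ under both classes, so $v(2)=0$.
\item The marginal of $x_1$ is $\mathcal{N}(\mu_A,\sigma_1^2+\sigma_A^2)$ under $C_1$ and $\mathcal{N}(0,\sigma_1^2)$ under $C_0$. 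Hence $v(1)$ is an explicit quadratic polynomial in $x_1$ alone. The region $\Gamma_1^{v}=\{v(1)>\tau^v\}$ is therefore a cylinder $B\times\mathbb{R}$ for some set $B\subseteq\mathbb{R}$.
\item For $v(1,2)$ we apply the Sherman--Morrison formula with $\mathbf{w}=\boldsymbol{\Sigma}^{-1}\mathbf{e}_1$:
\begin{equation*}
\boldsymbol{\Sigma}_1^{-1}=\boldsymbol{\Sigma}^{-1}-\frac{\sigma_A^2\,\mathbf{w}\mathbf{w}^T}{1+\sigma_A^2\mathbf{e}_1^T\mathbf{w}}.
\end{equation*}
The linear term $\mu_A\mathbf{e}_1^T\boldsymbol{\Sigma}_1^{-1}\mathbf{x}$ is then also a multiple of $\mathbf{w}^T\mathbf{x}$. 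It follows that $v(1,2)=g(\mathbf{w}^T\mathbf{x})$, where $g(t)=\alpha t^2+\beta t+\gamma$ with $\alpha=\tfrac{1}{2}\sigma_A^2/(1+\sigma_A^2\mathbf{e}_1^T\mathbf{w})\ge 0$ and $\beta\propto\mu_A$.
\end{itemize}
From the expression for $\boldsymbol{\Sigma}^{-1}$ already written in the proof of Theorem~\ref{Them-ShapleySuboptimalityBetaGaussian}, $\mathbf{w}^T\mathbf{x}=\frac{1}{1-\rho^2}\big(x_1/\sigma_1^2-\rho x_2/(\sigma_1\sigma_2)\big)$. Its $x_2$-coefficient is therefore nonzero exactly when $\rho\neq0$.

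\textbf{Step 2: dependence on $x_2$.} We obtain $\phi(x_1)=\tfrac12 v(1)+\tfrac12 g(\mathbf{w}^T\mathbf{x})$. For fixed $x_1$, this is a polynomial in $x_2$ of degree at most two.
\begin{itemize}
\item If $\sigma_A>0$, then $\alpha>0$ and the polynomial is a genuine upward quadratic in $x_2$, for every $\mu_A$, including $\mu_A=0$. This is what gives the "regardless of $\mu_A$" claim.
\item If $\sigma_A=0$, the attack is the constant $\mu_A$ and the statement reduces to Theorem~\ref{Them-ShapleySuboptimalityGaussian}. Note that $\mu_A=\sigma_A=0$ is excluded, since then $C_1=C_0$.
\end{itemize}
In either case $\partial\phi/\partial x_2$ is not identically zero, so the level set $\{\phi=\tau^s\}$ is not a union of vertical lines.

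\textbf{Step 3: the regions differ on a set of positive probability (main obstacle).} Mere dependence on $x_2$ is not quite enough, because the optimal thresholds could in principle make a region degenerate (empty or all of $\mathbb{R}^2$). I would argue by $x_1$-sections.

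For $\sigma_A>0$, every section $\{x_2:\phi>\tau^s\}$ contains all large $|x_2|$, so it is nonempty. The section is a proper subset of $\mathbb{R}$ exactly when $\tau^s>\tfrac12 v(1)(x_1)+\tfrac12\min_t g(t)$. Because $v(1)$ is a nonconstant continuous quadratic in $x_1$, this inequality holds on an open set of $x_1$ unless $\tau^s$ lies below the global minimum of $\phi$. In that case $\Gamma_1^\phi=\mathbb{R}^2$, and I would exclude it by showing that the "always decide $C_1$" rule is beaten by some finite threshold. This holds because $f(\mathbf{x}|C_0)$ dominates the scaled $f(\mathbf{x}|C_1)$ on a positive-measure region near where the likelihood ratio is small. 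I expect this to be the most delicate point.

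On the open set of $x_1$ just found, the section of $\Gamma_1^\phi$ is neither empty nor $\mathbb{R}$, whereas each section of the cylinder $\Gamma_1^v$ is either empty or $\mathbb{R}$. The symmetric difference $\mathcal{M}$ of the two regions therefore has positive Lebesgue measure. Since the Gaussian densities have full support, $\mathcal{M}$ has positive probability. We then conclude different decisions, and hence different error probabilities, exactly as at the end of the proof of Theorem~\ref{Them-ShapleySuboptimalityGaussian}.
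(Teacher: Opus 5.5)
\textbf{Gap 1: Step 3's exclusion of the degenerate threshold fails, and the statement needs a hypothesis on the priors.}

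Your Steps 1--2 are the paper's argument in cleaner form. The paper also uses $v(2)=0$ and the fact that $v(1)$ involves only $x_1$. It then shows that the $x_2^2$ coefficient $(\boldsymbol{\Sigma}^{-1})_{22}-({\boldsymbol{\Sigma}'}^{-1})_{22}$ of $v(1,2)$ vanishes only if $\rho^2\sigma_A^2=0$. This is exactly your $\alpha>0$ combined with the nonzero $x_2$-component of $\mathbf{w}$. Your treatment of $\sigma_A=0$ improves on the paper, which simply declares $\sigma_A^2>0$. The paper then just asserts that $x_2$-dependence of $\phi$ makes the optimal regions differ on a set of positive probability. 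You rightly see that degenerate optimal thresholds must be ruled out, but the way you propose to rule them out fails.

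Since $\boldsymbol{\Sigma}^{-1}-\boldsymbol{\Sigma}_1^{-1}=2\alpha\mathbf{w}\mathbf{w}^T\succeq 0$ and the linear term also lies along $\mathbf{w}$, for $\sigma_A>0$ you have $v(1,2)=g(\mathbf{w}^T\mathbf{x})\ge\min_t g(t)>-\infty$. So the likelihood ratio is bounded below by $e^{\min g}>0$.
\begin{itemize}
\item If $\pi_1e^{\min g}\ge\pi_0$, the integrand in (\ref{eq2}) is negative almost everywhere.
\item Both $v(1)$ and $\phi$ are bounded below, so for each statistic the optimal threshold is at or below its minimum.
\item Both optimized tests then become ``always decide $C_1$''. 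They make the same decisions and have $P_{e,\phi}=P_{e,v}=\pi_0$.
\end{itemize}
A concrete counterexample: $\sigma_1=\sigma_2=\sigma_A=1$, $\rho=1/2$, $\mu_A=0$ gives $\min g=\frac12\ln(3/7)$, so any $\pi_1\ge 0.61$ violates the theorem.

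Your claim that $f(\mathbf{x}|C_0)$ dominates the scaled $f(\mathbf{x}|C_1)$ somewhere therefore needs an added assumption. One option is $\pi_0\ge\pi_1$, the equal-prior setting of Theorem~\ref{Them-ShapleyBetterOrInferior} and the simulations; it suffices because $f(\mathbf{x}|C_1)\neq f(\mathbf{x}|C_0)$ forces $e^{\min g}<1$. The precise condition is $\pi_0/\pi_1>e^{\min g}$.

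Under that assumption your argument closes, for the reason you sensed. $\phi$ is a strictly convex quadratic whose unique minimizer $\mathbf{x}^*$ has $x_1^*$ minimizing $v(1)$ and $\mathbf{w}^T\mathbf{x}^*$ minimizing $g$. So the likelihood ratio attains its global minimum at $\mathbf{x}^*$, and $\pi_0 f(\mathbf{x}|C_0)>\pi_1 f(\mathbf{x}|C_1)$ on a neighborhood of $\mathbf{x}^*$. A threshold slightly above $\min\phi$ then removes a small ellipse from $\Gamma_1^{\phi}$ and strictly beats $\pi_0$.

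\textbf{Gap 2: different decision regions do not imply different error probabilities.}

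Your final sentence, like the end of the paper's proof, makes this inference, and it is not valid. The strict inequality in Theorem~\ref{Them-LRTOptimalityGeneral} only compares a rule with the Bayes region $\{v(1,2)>\ln(\pi_0/\pi_1)\}$. Under your prior assumption that region is the complement of a slab $\{t_-\le\mathbf{w}^T\mathbf{x}\le t_+\}$. It coincides with neither $\Gamma_1^{v}$ (the complement of a vertical strip) nor $\Gamma_1^{\phi}$ (the exterior of an ellipse). Two distinct suboptimal regions can have equal error, so $P_{e,\phi}\neq P_{e,v}$ needs an actual computation or bound of both error probabilities. As it stands, your argument with the prior assumption proves only that the optimal decision regions differ on a set of positive probability.
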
 }
\begin{proof} {  Under $C_1$, 
$\mathbf{x}$ has mean vector $ [\mu_A, 0]^T$ and covariance matrix $\boldsymbol{\Sigma}'$, where 
\begin{equation} \label{sigprime} \begin{aligned}
\boldsymbol{\Sigma}' &= \boldsymbol{\Sigma} + \begin{pmatrix} \sigma_A^2 & 0 \\ 0 & 0 \end{pmatrix} \\
&= \begin{pmatrix} \sigma_1^2 + \sigma_A^2 & \rho \sigma_1 \sigma_2 \\ \rho \sigma_1 \sigma_2 & \sigma_2^2 \end{pmatrix}.
\end{aligned}
\end{equation}
Let $\Gamma_1^{\phi} = \{ [x_1, x_2]^T : \phi(x_1) > \tau^s \}$ and $\Gamma_1^{v} = \{ [x_1, x_2]^T : v(1) > \tau^v \}$, where $\tau^s$ and $\tau^v$ are the optimum thresholds (minimum error probability) for the two tests. 
For the two tests to be equivalent, then $\Gamma_1^{\phi}$ must be equivalent to $\Gamma_1^{v}$, , implying $\phi(x_1)$ is not a function of $x_2$. We will show this cannot be true. 
Thus, we again use a proof by contradiction. 

Using (\ref{shap}) for the two-dimensional case yields 

% \begin{equation}\phi(x_1) = (v(1) + v(1,2) - v(2))/2.\label{80}\end{equation}
{
\begin{equation}
\phi(x_1 ) = \frac{v(1 ) + v(1,2 ) - v(2 )}{2}.
\label{80}
\end{equation}
}

Since $x_2$ is not attacked/anomalous, {$v(2) = 0$} (see (\ref{v2inT4})), and {$v(1)$} only depends on $x_1$ since {\begin{equation}
v(1) = \ln \frac{f(x_1 | C_1)}{f(x_1 | C_0)}.
\end{equation}} Thus, any dependency of the Shapley function on $x_2$ can only arise from {$v(1,2 )$} given (\ref{80}). Now, the joint log-likelihood ratio, $v(1,2)$, is defined as
$ v(1,2) = $
{\begin{equation}
%v(1,2) = 
\ln \left( \frac{\frac{1}{2\pi |\boldsymbol{\Sigma}'|^{1/2}} \exp\left[ -\frac{1}{2} (\mathbf{x}-[\mu_A, 0]^T )^T {\boldsymbol{\Sigma}'}^{-1} (\mathbf{x}-[\mu_A, 0]^T ) \right]}{\frac{1}{2\pi |\boldsymbol{\Sigma}|^{1/2}} \exp\left[ -\frac{1}{2} \mathbf{x}^T \boldsymbol{\Sigma}^{-1} \mathbf{x} \right]} \right),
\end{equation}
}
which is equivalent to $ v(1,2) = $ 
{
\begin{equation}
%v(1,2 | \mathbf{a}) = \frac{1}{2} 
\ln \frac{|\boldsymbol{\Sigma}|}{|\boldsymbol{\Sigma}'|} + \frac{1}{2} \left[ \mathbf{x}^T \boldsymbol{\Sigma}^{-1} \mathbf{x} - (\mathbf{x}-[\mu_A, 0]^T)^T {\boldsymbol{\Sigma}'}^{-1} (\mathbf{x}- [\mu_A, 0]^T ) \right].\label{vec} \end{equation}}
%{\color{green}The fix you have added to the overleaf (above and below) is not correct. Major issue: a is not equal to $[\mu_a,0]$ as said after (34). $\mu_a$ is the expected value a. Also you did not introduce the $v(\cdot |a)$ notation. Further you need to prove v(1,2) is not a function of $x_2$, not $v(1,2|a)$. }
Expanding (\ref{vec}) yields 
% obtain the closed-form expression for 
{\begin{equation}
\begin{aligned}
v(1,2) &= \frac{1}{2} \ln \frac{|\boldsymbol{\Sigma}|}{|\boldsymbol{\Sigma}'|} \\
&\quad + \frac{1}{2} ((\boldsymbol{\Sigma}^{-1})_{11} - ({\boldsymbol{\Sigma}'}^{-1})_{11})x_1^2 \\
&\quad + \frac{1}{2} ((\boldsymbol{\Sigma}^{-1})_{22} - ({\boldsymbol{\Sigma}'}^{-1})_{22})x_2^2 \\
&\quad + ((\boldsymbol{\Sigma}^{-1})_{12} - ({\boldsymbol{\Sigma}'}^{-1})_{12})x_1 x_2 \\
&\quad + \mu_A ({\boldsymbol{\Sigma}'}^{-1})_{11}x_1 \\
& + \mu_A ({\boldsymbol{\Sigma}'}^{-1})_{12}x_2 - \frac{1}{2} \mu_A^2 ({\boldsymbol{\Sigma}'}^{-1})_{11},
\end{aligned}
\end{equation}}where $(\boldsymbol{\Sigma}^{-1})_{22} = \frac{\sigma_1^2}{|\boldsymbol{\Sigma}|} = \frac{1}{\sigma_2^2(1-\rho^2)}$ and $({\boldsymbol{\Sigma}'}^{-1})_{22} = \frac{\sigma_1^2 + \sigma_A^2}{|\boldsymbol{\Sigma}'|}$ from 
(\ref{sigprime}). 
In order for {$\phi(x_1 )$} to not be a function of $x_2$, we 
need all the coefficients multiplying $x_2^2$, $x_2$ and $x_1 x_2$ to be zero.  This requires ($x_2^2$ case) $(\boldsymbol{\Sigma}^{-1})_{22} - ({\boldsymbol{\Sigma}'}^{-1})_{22} = 0$, which is equivalent to
\begin{equation}\frac{1}{\sigma_2^2(1-\rho^2)} = \frac{\sigma_1^2 + \sigma_A^2}{|\boldsymbol{\Sigma}'|}.\end{equation} Substituting $|\boldsymbol{\Sigma}'| = \sigma_2^2(\sigma_1^2 + \sigma_A^2 - \rho^2 \sigma_1^2)$, the condition becomes
\begin{equation}\frac{1}{\sigma_2^2 (1 - \rho^2)} = \frac{\sigma_1^2 + \sigma_A^2}{\sigma_2^2 (\sigma_1^2 + \sigma_A^2 - \rho^2 \sigma_1^2)}.\end{equation}

Cross-multiplying yields $\sigma_1^2 + \sigma_A^2 - \rho^2 \sigma_1^2 = \sigma_1^2 + \sigma_A^2 - \rho^2 \sigma_1^2 - \rho^2 \sigma_A^2$, which simplifies to $0 = \rho^2 \sigma_A^2$. Under the assumed conditions ($\sigma_A^2 > 0$ is always true) $\rho \neq 0$, this is a contradiction. 
%Similarly, the coefficient linear term $\mu_A ({\boldsymbol{\Sigma}'}^{-1})_{12}x_2$ also persists unless $\rho=0$. 
Thus, {$\phi(x_1)$} depends on $x_2$, making its decision boundary deviate from the  boundary $v(1)=\tau^v$, proving the two tests are different. These differences occur on sets which occur with nonzero probability since our pdfs have support over the whole two dimensional space. This implies the error probabilities of the two tests will be different.} \end{proof}

We note that while we have focused on attacks/anomalies only at sensor 1, similar results can be obtained for attacks/anomalies only at sensor 2 by symmetry.

%{\color{red} add new theorems showing Shapley better or inferior in certain cases in next subsection 2).}

\subsubsection{Comparing Shapley and $v(1)$ Tests}
{
\begin{theorem} \label{Them-ShapleyBetterOrInferior} 
Let the two sensor unattacked/nonanomalous sensor data follow a zero mean ($\mu_1=\mu_2=0$ in (\ref{BG})), common standard deviation 
%\footnote{Such restrictions can be removed, but will result in a more complicated theorem.} 
($\sigma_1=\sigma_2=\sigma$ in (\ref{BG})) bivariate Gaussian distribution under class $C_0$.  
Starting with data following $C_0$, a constant additive nonzero attack/anomaly $A$  is added to the first sensor observation to arrive at the data following $C_1$.  Assume 
%$A > 0$,
$|\rho| $ is close to but slightly less than $1 $ and let $C_0$ and $C_1$ be equally likely so $\pi_0=\pi_1=1/2$. Let $P_{e,\phi}$ and $P_{e,v}$ denote the minimum probability of error for the Shapley function test and the test using $v(1)$, respectively. 
\begin{enumerate}
    \item If $\rho $ is close to but slightly less than $ 1 $, the Shapley function test is strictly superior to the $v(1)$ test, such that $P_{e,\phi} < P_{e,v}$.
    %for a sufficiently large attack magnitude $|A|$.
    \item If $\rho $ is close to but slightly greater  than $ -1 $, the Shapley function test is  strictly inferior to the $v(1)$ test, such that $P_{e,\phi} > P_{e,v}$.
    % for a sufficiently large attack magnitude $|A|$.
\end{enumerate}
\end{theorem}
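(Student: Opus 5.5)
The plan is to compute both minimum error probabilities in closed form and compare them. With $\mu_1=\mu_2=0$ and $\sigma_1=\sigma_2=\sigma$, both test statistics are affine in $(x_1,x_2)$. By (\ref{T3.3v1}) and (\ref{eq18}), each equals a constant times a linear form plus a constant. The linear forms are $T_v=x_1$ for $v(1)$ and $T_\phi=(2-\rho^2)x_1-\rho x_2$ for the Shapley test; the scaling factor $A/(2(1-\rho^2)\sigma^2)$ can be absorbed into the threshold, with a sign flip if $A<0$. Each test therefore thresholds a scalar Gaussian variable $T$. Under $C_0$ and $C_1$, $T$ has the same variance $s^2$, and its means differ by a deterministic shift $\delta$: $\delta_v=A$ for $T_v$ and $\delta_\phi=(2-\rho^2)A$ for $T_\phi$. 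With $\pi_0=\pi_1=1/2$, the threshold optimized over one scalar Gaussian shift is the midpoint of the two means (the one-dimensional case of Theorem~\ref{Them-LRTOptimalityGeneral}). This gives $P_e=Q\!\left(|\delta|/(2s)\right)$, so comparing $P_{e,\phi}$ and $P_{e,v}$ reduces to comparing the deflections $d_v=|A|/\sigma$ and $d_\phi=(2-\rho^2)|A|/s_\phi$, because $Q$ is strictly decreasing.

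The key step is computing $s_\phi^2=\mathrm{Var}(T_\phi)$ under $C_0$, using $\mathrm{Var}(x_1)=\mathrm{Var}(x_2)=\sigma^2$ and $\mathrm{Cov}(x_1,x_2)=\rho\sigma^2$:
\begin{equation*}
s_\phi^2=\sigma^2\left[(2-\rho^2)^2+\rho^2-2\rho(2-\rho^2)\,\mathrm{Cov}\text{-factor}\right].
\end{equation*}
The entire claimed dependence on the sign of $\rho$ must come from this cross term, since the diagonal terms depend only on $\rho^2$. I would then set $\rho=\pm(1-\epsilon)$, expand $d_\phi^2/d_v^2=(2-\rho^2)^2\sigma^2/s_\phi^2$ to leading order in $\epsilon$, and show that the ratio exceeds one for $\rho\to1^-$ and falls below one for $\rho\to-1^+$. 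That would give items 1 and 2 respectively, with strict inequalities. By continuity, these then hold for all $|\rho|$ in a neighborhood of $1$.

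The main obstacle is exactly this cross-term bookkeeping, and I expect it to need care. A naive evaluation gives a cross term of $-2\rho(2-\rho^2)\cdot\rho\sigma^2=-2\rho^2(2-\rho^2)\sigma^2$, which is even in $\rho$. That would make $d_\phi/d_v$ independent of the sign of $\rho$, and at $|\rho|\to1$ the ratio tends to $1/\sqrt{1-1}$, that is, it favors Shapley for both signs. So I would first re-derive the $x_2$ coefficient of $v(1,2)$ from $\boldsymbol{\Sigma}^{-1}$ and check which sign convention for $\rho$ and for the attack the theorem intends. The sign asymmetry could arise, for example, if the relevant comparison also involves the sensor-$2$ localization question, or if it involves the sign of the off-diagonal entry of $\boldsymbol{\Sigma}^{-1}$ in (\ref{Sigmadef}). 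Once the correct variance expression is fixed, the remaining steps are routine: a one-variable inequality in $\rho$ near $\pm1$, followed by monotonicity of $Q$.
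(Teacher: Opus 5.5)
Your reduction to scalar deflections is exactly right, and so is the ``naive'' cross term you distrust. The gap is that you leave item~2 waiting on a ``correct variance expression'' that does not exist. Re-deriving the $x_2$ coefficient of $v(1,2)$ from $\boldsymbol{\Sigma}^{-1}$ gives $-A\rho/((1-\rho^2)\sigma^2)$, which agrees with (\ref{eq18}). So $T_\phi=(2-\rho^2)x_1-\rho x_2$ and $\mathrm{Cov}(x_1,x_2)=\rho\sigma^2$ are correct. Finishing the computation gives
\[
s_\phi^2=\sigma^2\bigl[(2-\rho^2)^2+\rho^2-2\rho^2(2-\rho^2)\bigr]=\sigma^2(1-\rho^2)(4-3\rho^2),\qquad \frac{d_\phi^2}{d_v^2}=1+\frac{\rho^2(3-2\rho^2)}{(1-\rho^2)(4-3\rho^2)},
\]
which exceeds $1$ for every nonzero $\rho\in(-1,1)$. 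Hence $P_{e,\phi}<P_{e,v}$ for all nonzero $\rho$ of either sign. Item~1 holds, in a stronger form, and item~2 is false.

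No sign convention can rescue item~2. The map $x_2\mapsto -x_2$ turns the model with correlation $\rho$, under both classes, into the model with $-\rho$. It leaves $v(1)$ and $v(2)=0$ unchanged and carries $v(1,2)$ to its counterpart, so $P_{e,\phi}$ is even in $\rho$. Meanwhile $P_{e,v}=1-\Phi(|A|/(2\sigma))$ does not depend on $\rho$ at all, so items~1 and~2 contradict each other. For example, at $\rho=\pm0.8$, $\sigma=2$, $|A|=1.5$ you get $P_{e,v}\approx0.354$ and $P_{e,\phi}=1-\Phi(0.589)\approx0.278$ for both signs.

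You should not try to reproduce the paper's route to item~2, because it rests on two errors.
\begin{itemize}
\item It writes $\phi(x_1)=ax_1+b(x_1-x_2)$ and asserts $a,b>0$. But $b=A\rho/(2(1-\rho^2)\sigma^2)$ has the sign of $A\rho$. For $A>0>\rho$ one has $0<a+2b<a+b$, so the paper's own ratio $(a+b)/(a+2b)$ exceeds $1$ and again favors Shapley.
\item In both cases it replaces $n_2$ by $\pm n_1$ inside $b(x_1-x_2)$, discarding $b(n_1\mp n_2)$. The standard deviation of that term, $|b|\sigma\sqrt{2(1\mp\rho)}$, diverges as $|\rho|\to1$. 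So even item~1 is only heuristic there, and your exact-deflection argument is the rigorous route to it.
\end{itemize}

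Table~\ref{tab:dependent_results} is not evidence for item~2 either. Its entry for $\rho=0.8$, $\sigma=2$, $A=1.5$ is $P_{e,\phi}=0.1745$. That lies below the Bayes error $1-\Phi(0.625)\approx0.266$ of the full bivariate likelihood-ratio test on $(x_1,x_2)$, which is impossible.

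The correct outcome of your plan is a proof of item~1 for all $\rho\neq0$ together with a counterexample to item~2, not a search for a different convention.
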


\begin{proof}
{First assume $A>0$. 
Let $\Phi(x)$ denote the standard Gaussian (zero mean and unit variance) cumulative distribution function (cdf).  First consider the test using $v(1)$. 
From (\ref{T3.3v1}), 
comparing $v(1)$ to a threshold for Gaussian $x_1$ is equivalent to comparing $x_1$ to a threshold. 
Under $C_1$,  $x_1 = A + n_1 $ where $n_1$ denotes a zero mean and  standard deviation $\sigma$ Gaussian random variable. Under $C_0$, $x_1 = n_1 $.  
Since the two Gaussian pdfs of $x_1$ under $C_0$ and $C_1$, both with the same standard deviation, are centered at $0$ and $A$ under the two equally likely classes, then the optimum threshold is midway in between at $A/2$. 
This follows from the proof of Theorem~\ref{Them-LRTOptimalityGeneral} 
where it is shown that for equal priors, $\pi_0=\pi_1$, the optimum test decides for the class of the larger of the two conditional pdfs 
$ f({\bf x}|C_1), f({\bf x}|C_0)$, thus 
the threshold $x_1$ is compared to should be midway in between the peaks of the two pdfs, see Fig.~\ref{fig:error_analysis}.

\begin{figure}[H]    
\centering    
\includegraphics[width=0.5\textwidth]{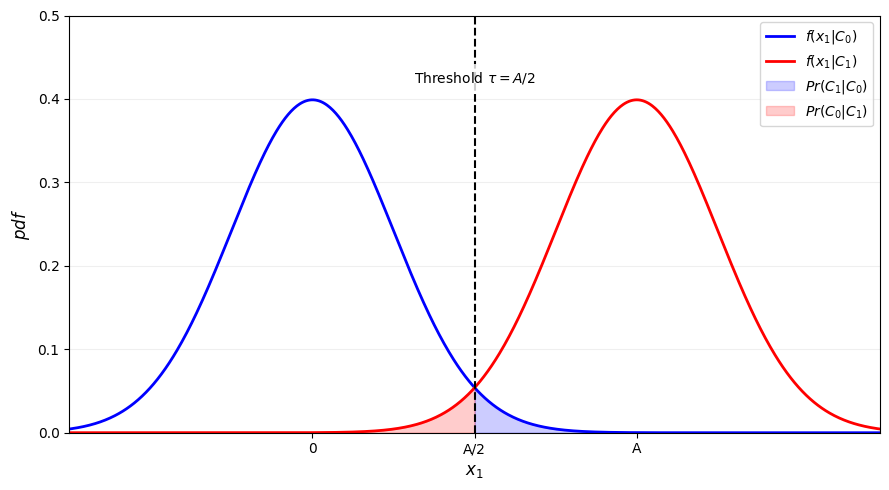} 
\caption{Statistical error probabilities for the binary hypothesis test discussed in the beginning of Theorem~\ref{Them-ShapleyBetterOrInferior}. 
% {\color{red} FIX THIS IN FIG NO PFA OR PMISSDET, ALSO THE BLUE AND RED CURVES ARE $f(x_1|C_i), i=0,1$: 
% The shaded areas represent $Pr(C_1|C_0)$ 
% and $Pr(C_0|C_1)$ defined in 
% Theorem~\ref{Them-LRTOptimalityGeneral}}
}.
\label{fig:error_analysis}\end{figure}

As per Fig.~\ref{fig:error_analysis} 
%Considering $A>0$ gives 
\begin{eqnarray} 
P_{e,v} &=& \frac{1}{2}  Pr( A + n_1 < \frac{A}{2} | C_1) 
+
 \frac{1}{2}  Pr( n_1 >  \frac{A}{2} | C_0) \nonumber \\
&=& \frac{1}{2} Pr( n_1 < -\frac{A}{2} | C_1) + 
\frac{1}{2} Pr( n_1 >   \frac{A}{2 \sigma} | C_0) \nonumber \\ 
&=& \frac{1}{2} \Phi\left( - \frac{A}{2 \sigma} \right) + 
\frac{1}{2} \left( 1- \Phi(\frac{A}{2 \sigma})\right) 
 \label{proof:pev1}
\end{eqnarray}
and $P_{e,v}$ is a decreasing function of $ \frac{A}{2 \sigma} $ since a cdf is always an increasing function when the random variable involved has support over the real line. 
%{\ Picture above will also explain this.}

For the Shapley function test (still assuming $A>0$), we rewrite the Shapley function from 
(\ref{eq18}) ($\sigma_1=\sigma_2=\sigma, \mu_1=\mu_2=0$) in a form that isolates the difference between the two sensor observations to obtain 
\begin{equation}
\phi(x_1) = a x_1 + b (x_1 - x_2) - \text{Constant},
\label{eq:phi_split_form}
\end{equation}
where \begin{equation}
    a = \frac{A(2-\rho^2 - \rho)}{2(1-\rho^2)\sigma^2}
    \label{a}
\end{equation}
and 
\begin{align}
b = \frac{A\rho}{2(1-\rho^2)\sigma^2}.
\label{b}
\end{align}
Note that $ a,b > 0 $ under the given conditions.
Since the constant term in (\ref{eq:phi_split_form}) can be incorporated into the threshold $\tau^s$ in the test and this threshold will be optimized, the constant term 
in (\ref{eq:phi_split_form}) 
is not important. 
If we denote the overall optimized threshold, which combines $\tau^s$ and the constant in (\ref{eq:phi_split_form}) as $\tau'$, then the test can be expressed as 
\begin{equation}
  \phi(x_1) =  a x_1 + b (x_1 - x_2) \mathop{\gtrless}_{H_0}^{H_1} \tau'
\label{phix1andb}\end{equation}

First consider the case where $\rho$ is close to $1$. From (\ref{phix1andb}) and under $C_1$, $\phi(x_1) \approx  a(A+n_1) + bA = (a+b)A + a n_1$  since $n_1 \approx n_2$.  Under $C_0$, $\phi(x_1) \approx a n_1$ since $n_1 \approx n_2$. 
Since the two Gaussian pdfs, both with the same standard deviation, are centered at $0$ and $(a+b)A $ under the two equally likely classes, then (similar to the $v(1)$ case) the optimum threshold is midway in between at $(a+b)A /2$. 
Thus 
\begin{eqnarray} 
P_{e,\phi} &=& \frac{1}{2}  Pr( (a+b)A + a n_1 < \frac{(a+b)A }{2} | C_1) \nonumber \\
&+& 
 \frac{1}{2}  Pr( a n_1 >  \frac{(a+b)A}{2} | C_0) \nonumber \\
&=& \frac{1}{2}  Pr( (a n_1 < - \frac{(a+b)A }{2} | C_1) \nonumber \\
&+& 
 \frac{1}{2}  Pr( a n_1 >  \frac{(a+b)A}{2} | C_0) \nonumber \\
 &=& \frac{1}{2}  \Phi\left(  - \frac{(a+b)A }{2a \sigma} \right) \nonumber \\
&+& 
 \frac{1}{2}  \left( 1 - \Phi\left(\frac{(a+b)A}{2 a \sigma}\right) \right) \nonumber \\ 
 \label{proof:pephi1}
\end{eqnarray}
which (similar to the $v(1)$ case) is a decreasing function of $\frac{(a+b)A}{2 a \sigma}$. Comparing (\ref{proof:pev1}) and (\ref{proof:pephi1}), the Shapley test is better (based on being a decreasing function of this quantity) if 
$\frac{(a+b)A}{2 a \sigma} > \frac{A}{2 \sigma}$ which is always true since 
$\frac{(a+b)}{a}>1$ when $a,b > 0 $. 

If $\rho$ is close to but slightly greater than $-1$, then under $C_1$, $\phi(x_1) \approx a(A+n_1) + b(A +2 n_1)  = (a+b)A + (a+2b)n_1$ since $ n_2 \approx -n_1 $.   Under $C_0$, $\phi(x_1) = a n_1  + b(2 n_1) $ since $ n_2 \approx -n_1 $. This time the two pdfs are centered at $0$ and $(a+b)A$ so the optimum threshold is midway in between. Thus 
\begin{eqnarray} 
P_{e,\phi} &=& \frac{1}{2}  Pr( (a+b)A + (a + 2b) n_1 < \frac{(a+b)A }{2} | C_1) \nonumber \\
&+& 
 \frac{1}{2}  Pr(  (a + 2b)n_1 >  \frac{(a+b)A}{2 } | C_0) \nonumber \\
&=& \frac{1}{2}  Pr( ( n_1 < - \frac{(a+b)A }{2(a+ 2b)} | C_1) \nonumber \\
&+& 
 \frac{1}{2}  Pr( n_1 >  \frac{(a+b)A}{2(a + 2b)} | C_0) \nonumber \\
 &=& \frac{1}{2}  \Phi\left( - \frac{(a+b)A }{2 (a + 2b)  \sigma} \right) \nonumber \\
&+& 
 \frac{1}{2}  \left( 1 - \Phi\left(\frac{(a+b)A}{2 (a + 2b) \sigma}\right) \right) \nonumber \\ 
 \label{proof:pephi2}
\end{eqnarray}
Comparing (\ref{proof:pev1}) and (\ref{proof:pephi2}), the Shapley test is inferior if 
$\frac{(a+b)A}{2 (a + 2b) \sigma} < \frac{A}{2 \sigma}$ which is always true since 
$\frac{(a+b)}{(a + 2b)}<1$ when $a,b > 0 $.

A similar proof can be given if $A<0$, which we skip for brevity.

}
\end{proof}

To compute $a$ and $b$ 
from (\ref{a}) and (\ref{b}), we need the value of $\rho$. Thus assume $\rho$ is known (or can be learned as it is in Shapley).  Then, for high correlation magnitude ($|\rho|$ near $1$), one can 
employ an approach that switches between the two approaches (Shapley and  $v(1)$) 
considered in Theorem~\ref{Them-ShapleyBetterOrInferior} based on the sign of $\rho$ and this approach will be strictly better than either approach in these cases. 
%If the sign of $\rho$ is unknown, one might try to estimate it. 
}

\section{Numerical Results}
%  \textbf{\textcolor{red}{
%  % Parameters needed for the plots caption, 
%  also give discussion for the plots,This should come before the discussion of Table 2 since the
% related theorems are first. .Add the case of rho= ±.9.
% }}

\subsubsection{Statistically Independent Observations}

{To further demonstrate the results} in Theorem III.1, we numerically compare the probability of error $P_e$ of a test that compares $\phi(x_i)$ to {the optimal threshold} to that for a test that compares $v(i)$ to {the optimal threshold}. The two tests each make a decision on if the attack/anomaly includes the $i$-th sensor. The better test will have a smaller $P_e$, {indicating which statistic} is better for localizing the attack/anomaly. %\textcolor{green}{Since both $v(x_i)$ and $\phi(x_i)$ are derived from the Log-Likelihood Ratio (LLR), the zero threshold minimizes the total $P_e$ under equal prior probabilities. }
In our numerical results, we use a Monte Carlo simulation {with $M=10^{7}$ runs} to approximate the probability of error. Let the symbols $P_{e,\phi}$ and $P_{e,v}$ denote the {minimum achievable} probability of error for the test using $\phi(x_i)$ and {the test using} $v(i)$, respectively.

In {Table~\ref{tab:two_sensor_results_updated}}, we present results obtained from running a Monte Carlo simulation with {$M= 10^{7}$} runs. {We also analytically calculated the numerical results and compared them to the counterpart obtained from Monte Carlo Simulations. We found  they were very close in all cases, which helps justify the accuracy of these numerical results.}{ We generate the non-anomalous data sample $(x_1, x_2)$ as having 
statistically independent components with each component being Gaussian and  having zero mean and standard deviation $\sigma$ as given in 
{Table~\ref{tab:two_sensor_results_updated}}. 
We let half of the $M$ Monte Carlo runs have sensor {1} be anomalous while the other half do not. 
Sensor 2 is always not anomalous. The minimum error probabilities are found from a grid search over a fine grid.}

{In {Table~\ref{tab:two_sensor_results_updated}}, we consider three distinct types of anomaly/attack  models to evaluate the localization performance of $v(i)$ and $\phi(x_i)$. For \textbf{Type A}, a constant value called the anomaly/attack magnitude, denoted by %\textcolor{red}
{$A$}, is added to the non-anomalous/unattacked  observation at sensor 1. For \textbf{Type B}, a Gaussian random variable is added to the non-anomalous/unattacked  observation. The added  %\textcolor{red}
{Gaussian} variable has a mean of %\textcolor{red}
{$A$} and a standard deviation $\sigma_a$. For \textbf{Type C}, a uniform random variable is added to the non-anomalous/unattacked  observation. The added uniform random variable is defined by the sum of a constant %\textcolor{red}
{$A$} and 
a random variable 
uniformly distributed over the range $ [-UM/2,UM/2]$. 

}

The results in Table~\ref{tab:two_sensor_results_updated} show that in all cases considered, we find $P_{e,\phi} = P_{e,v}$ which agrees with the major conclusions of 
Theorem III.1, which says the two tests must be the same and achieve the same probability of error.  The numerical results for $P_{e,\phi} $ and $ P_{e,v}$, closely matched numerical calculations, but they also exhibit reasonable trends, for example they decrease with an increase in %\textcolor{red} 
{$A$}.

\subsubsection{Statistically Dependent Observations}
In Theorems III.3 through III.5, we show the two compared tests, the one using Shapley ($\phi(x_1)$) and the one using $v(1)$, are different, by essentially showing the decision regions are different for sets of observations that occur with nonzero probability. To illustrate these ideas with a figure, we focus on the proof of Theorem~\ref{Them-ShapleySuboptimalityBetaGaussian} where we essentially show that $\phi(x_1)$ must always depend on $x_2$, while $v(1)$ must always not depend on $x_2$.  This is why the decision regions, which are obtained by comparing these functions to a constant threshold, must be different. It is clear that $v(1)$ does not depend on $x_2$ since $x_2$ is integrated out in (22). While (25) shows that $\phi(x_1)$ can depend on $x_2$ through the quantity in (\ref{delta}), 
it is not obvious it must depend on  $x_2$ so the proof shows this is the case. To provide  intuition, we plot the quantity $\Delta(x_1,x_2)$ in (25) versus $x_2$ for the special case of {
$A=1.0$, $\sigma_1=1.0$, $\sigma_2=1.2$, $\rho = 0.5$ }and several values of $x_1$ (see the Fig.) in Figure~\ref{fig:delta_slice}. Based on Figure~\ref{fig:delta_slice}, 
$\Delta(x_1,x_2)$ varies with $x_2$ for all the cases shown, 
which agrees with the ideas stated in the proof of 
Theorem~\ref{Them-ShapleySuboptimalityBetaGaussian}.  

\begin{table}[htbp]
\centering
\caption{Minimum $P_e$ for $v(1)$ and $\phi(x_1)$. See text for details. }
\label{tab:two_sensor_results_updated}
\begin{small}
\setlength{\tabcolsep}{3.2pt} % 略微压缩以适配 8 行 Type B 带来的纵向视觉比例
\begin{tabular}{@{}ccccccc@{}}
\toprule
$\sigma$ & \textbf{Type} & $\sigma_a$ & \textbf{A} & \textbf{UM} & $P_{e,v}$ & $P_{e,\phi}$ \\ \midrule

% Type A: 4 Cases Total
1.0 & A & na & 1.0 & na & $3.08955 \times 10^{-1}$ & $3.08955 \times 10^{-1}$ \\
2.0 & A & na & 1.0 & na & $4.01197 \times 10^{-1}$ & $4.01197 \times 10^{-1}$ \\
1.0 & A & na & 5.0 & na & $6.24600 \times 10^{-3}$ & $6.24600 \times 10^{-3}$ \\
2.0 & A & na & 5.0 & na & $1.05552 \times 10^{-1}$ & $1.05552 \times 10^{-1}$ \\ \bottomrule

% 已有 Case: AM=1.0
1.0 & B & 0.1 & 1.0 & na & $3.08939 \times 10^{-1}$ & $3.08939 \times 10^{-1}$ \\
2.0 & B & 0.1 & 1.0 & na & $4.01180 \times 10^{-1}$ & $4.01180 \times 10^{-1}$ \\
1.0 & B & 1.0 & 1.0 & na & $3.27327 \times 10^{-1}$ & $3.27327 \times 10^{-1}$ \\ 
2.0 & B & 1.0 & 1.0 & na & $4.04178 \times 10^{-1}$ & $4.04178 \times 10^{-1}$ \\ \midrule
% 新增 Case: AM=5.0
1.0 & B & 0.1 & 5.0 & na & $6.30500 \times 10^{-3}$ & $6.30500 \times 10^{-3}$ \\ 
2.0 & B & 0.1 & 5.0 & na & $1.05779 \times 10^{-1}$ & $1.05779 \times 10^{-1}$ \\ 
1.0 & B & 1.0 & 5.0 & na & $1.87930 \times 10^{-2}$ & $1.87930 \times 10^{-2}$ \\ 
2.0 & B & 1.0 & 5.0 & na & $1.18829 \times 10^{-1}$ & $1.18829 \times 10^{-1}$ \\ \bottomrule

% Type C: 4 Cases Total
1.0 & C & na & 1.0 & 0.1 & $3.08990 \times 10^{-1}$ & $3.08990 \times 10^{-1}$ \\
2.0 & C & na & 1.0 & 0.1 & $4.01662 \times 10^{-1}$ & $4.01662 \times 10^{-1}$ \\ 
1.0 & C & na & 5.0 & 0.1 & $6.28700 \times 10^{-3}$ & $6.28700 \times 10^{-3}$ \\ 
2.0 & C & na & 5.0 & 0.1 & $1.05842 \times 10^{-1}$ & $1.05842 \times 10^{-1}$ \\ \bottomrule
\end{tabular}
\end{small}
\end{table}

\begin{table}[htbp]
\centering
\small
\setlength{\tabcolsep}{1pt} 
\caption{Comparison of $P_e$ for $v_1$ and $\phi(x_1)$. Attack Type A: Constant bias $A$. Attack Type B: Gaussian attack/anomaly $a \sim N(\mu_A, \sigma_A^2)$. $\sigma$ is noise standard deviation.}
\label{tab:dependent_results}
\renewcommand{\arraystretch}{1.3}

\begin{tabular*}{\columnwidth}{|@{\extracolsep{\fill}}l|l|l|c|c|@{}}
\hline
\textbf{Dist.} & \textbf{Att.} & \textbf{Parameters} & $P_{e,v}$ & $P_{e,\phi}$ \\ \hline
\multirow{24}{*}{\shortstack[l]{\textit{Scen. 1:}\\ Biv. Gau.}} 
 & A & $\rho=0.8, \sigma=2, A=1.5$  & 0.3537 & 0.1745 \\
 & A & $\rho=0.8, \sigma=2, A=3.0$  & 0.2265 & 0.0305 \\
 & A & $\rho=0.8, \sigma=2, A=5.0$  & 0.1055 & 0.0009 \\ \cline{2-5}
 & A & $\rho=0.9, \sigma=2, A=1.5$  & 0.3537 & 0.0756 \\ % 新增 0.9
 & A & $\rho=0.9, \sigma=2, A=3.0$  & 0.2265 & 0.0020 \\
 & A & $\rho=0.9, \sigma=2, A=5.0$  & 0.1055 & 0.0000 \\ \cline{2-5}
 & A & $\rho=0.8, \sigma=2, A=-1.5$ & 0.3539 & 0.1746 \\
 & A & $\rho=0.8, \sigma=2, A=-3.0$ & 0.2266 & 0.0305 \\
 & A & $\rho=0.8, \sigma=2, A=-5.0$ & 0.1056 & 0.0009 \\ \cline{2-5}
 & A & $\rho=0.9, \sigma=2, A=-1.5$ & 0.3539 & 0.0757 \\ % 新增 0.9
 & A & $\rho=0.9, \sigma=2, A=-3.0$ & 0.2266 & 0.0020 \\
 & A & $\rho=0.9, \sigma=2, A=-5.0$ & 0.1056 & 0.0000 \\ \cline{2-5}
 & A & $\rho=-0.8, \sigma=2, A=1.5$ & 0.3537 & 0.4040 \\
 & A & $\rho=-0.8, \sigma=2, A=3.0$ & 0.2265 & 0.3136 \\
 & A & $\rho=-0.8, \sigma=2, A=5.0$ & 0.1055 & 0.2092 \\ \cline{2-5}
 & A & $\rho=-0.9, \sigma=2, A=1.5$ & 0.3537 & 0.4211 \\ % 新增 -0.9
 & A & $\rho=-0.9, \sigma=2, A=3.0$ & 0.2265 & 0.3452 \\
 & A & $\rho=-0.9, \sigma=2, A=5.0$ & 0.1055 & 0.2534 \\ \cline{2-5}
 & A & $\rho=-0.8, \sigma=2, A=-1.5$ & 0.3539 & 0.4042 \\
 & A & $\rho=-0.8, \sigma=2, A=-3.0$ & 0.2266 & 0.3138 \\
 & A & $\rho=-0.8, \sigma=2, A=-5.0$ & 0.1056 & 0.2092 \\ \cline{2-5}
 & A & $\rho=-0.9, \sigma=2, A=-1.5$ & 0.3539 & 0.4213 \\ % 新增 -0.9
 & A & $\rho=-0.9, \sigma=2, A=-3.0$ & 0.2266 & 0.3454 \\
 & A & $\rho=-0.9, \sigma=2, A=-5.0$ & 0.1056 & 0.2536 \\ \hline

\multirow{12}{*}{\shortstack[l]{\textit{Scen. 2:}\\ Biv. Gau.}} 
 & B & $\rho=0.8, \sigma=1, \sigma_A=1, A=1.5$ & 0.2597 & 0.1207 \\
 & B & $\rho=0.8, \sigma=1, \sigma_A=1, A=3.0$ & 0.1049 & 0.0100 \\
 & B & $\rho=0.8, \sigma=1, \sigma_A=1, A=5.0$ & 0.0189 & 0.0001 \\ \cline{2-5}
 & B & $\rho=0.9, \sigma=1, \sigma_A=1, A=1.5$ & 0.2597 & 0.0914 \\ % 新增 0.9
 & B & $\rho=0.9, \sigma=1, \sigma_A=1, A=3.0$ & 0.1049 & 0.0039 \\
 & B & $\rho=0.9, \sigma=1, \sigma_A=1, A=5.0$ & 0.0189 & 0.0000 \\ \cline{2-5}
 & B & $\rho=-0.8, \sigma=1, \sigma_A=1, A=1.5$ & 0.2597 & 0.3683 \\
 & B & $\rho=-0.8, \sigma=1, \sigma_A=1, A=3.0$ & 0.1049 & 0.2670 \\
 & B & $\rho=-0.8, \sigma=1, \sigma_A=1, A=5.0$ & 0.0189 & 0.1549 \\ \cline{2-5}
 & B & $\rho=-0.9, \sigma=1, \sigma_A=1, A=1.5$ & 0.2597 & 0.4163 \\ % 新增 -0.9
 & B & $\rho=-0.9, \sigma=1, \sigma_A=1, A=3.0$ & 0.1049 & 0.3503 \\
 & B & $\rho=-0.9, \sigma=1, \sigma_A=1, A=5.0$ & 0.0189 & 0.2662 \\ \hline

\end{tabular*}
\end{table}

{
\color{red}

% {\color{red} What does the next part mean?  In Table 1 part I described nonanomalous first and then anomalous?  It is very unclear.  What is adversarial conditions? Use the same type of language as Table 1 discussion.  You are not evaluating the impact of correlation, don't say that.  You are considering cases with large $|\rho|$ and in particualar $.8,.9$.  Doesn't the Table give parameters.  Whay are you saying this.  Writing is long and boring and not even complete. Need to say nonanomalous has zero mean.  You repeat.  You describe the results of comparing $P_e$s at the beginning and the end?  
% Why do you say type B is reverse?  Seems the same to me? When $\rho=.8$, Shapley better. 
% When $\rho=-.8$, Shapley inferior. 
% Are we sure all these results are correct? Are the calculated?  MC? MC runs not stated?  Are all parameters stated? 
% You don't vary constant attack mag from $-5$ to $5$, you try some specific values given in the Table 2. Can you write this is a way that fits?  Is coherent, clear, and as short as possible, but where every needed detail is given.

% It seems you did not write anything to describe the figures?  This should come before the discussion of Table 2 since the related theorems are first.  It should be obvious from $\phi$ that $\rho$ will impact it (see last theorem).  It is also clear it will not impact $v(1)$.  It does not even involve both variables.  Think about what you are writing.} 
}
{Finally we present numerical results which compare  $P_{e,\phi}$ and $P_{e,v}$ for cases with statistically dependent observations. 
In Table~\ref{tab:dependent_results}, we present results obtained from running a Monte Carlo simulation with $M = 10^7$ runs. We generate the non-anomalous/unattacked  sensor observations $(x_1, x_2)$ following a Bivariate Gaussian pdf in (\ref{BG}) with $ \mu_1=\mu_2=0, \sigma_1=\sigma_2= \sigma$ for the values of $\sigma$ and $\rho$ given in Table~\ref{tab:dependent_results}. We let half of the M Monte Carlo runs have sensor 1 be anomalous/attacked, while the other half do not. Sensor 2 is always not anomalous/attacked. The minimum error probabilities $P_{e,\phi}$ and $P_{e,v}$  are found
from a grid search over a fine grid.

We consider 2 types of anomaly models to evaluate the localization performance of $v(i)$ and $\phi(x_i)$. For \textbf{Type A}, a constant value called the anomaly/attack magnitude, denoted by {$A$}, is added to the non-anomalous/unattacked observation at sensor 1. For \textbf{Type B}, a Gaussian random variable is added to the non-anomalous/unattacked  observation. The added Gaussian variable has a mean of {$A$} and a standard deviation $\sigma_a$. In Table~\ref{tab:dependent_results}, we present results for the specific values of %\textcolor{red}
{$A$}, $\sigma_a$, and $\rho$ given in the Table~\ref{tab:dependent_results}

The results in {Table~\ref{tab:dependent_results}} agree with the conclusions in Theorem~\ref{Them-ShapleySuboptimalityGaussian} and 
Theorem~\ref{Them-ShapleySuboptimalityGaussianRandom}, which claim that the two tests $v(i)$ and $\phi(x_i)$ are different in these cases. 
The results in {Table~\ref{tab:dependent_results}}
also follow the main conclusion in Theorem~\ref{Them-ShapleyBetterOrInferior}, which says 
that if $\rho$ is close to but slightly less than 1, 
then $P_{e,\phi} < P_{e,v}$ for {\bf Type A}  attacks/anomalies.
For these same attacks/anomalies, 
Theorem~\ref{Them-ShapleyBetterOrInferior}, also says if $\rho $ is close to but slightly greater than $ -1 $, then $P_{e,\phi} > P_{e,v}$. This is also observed in Table~\ref{tab:dependent_results}.

}

\begin{figure}[htbp]
\centering
\includegraphics[width=0.9\linewidth]{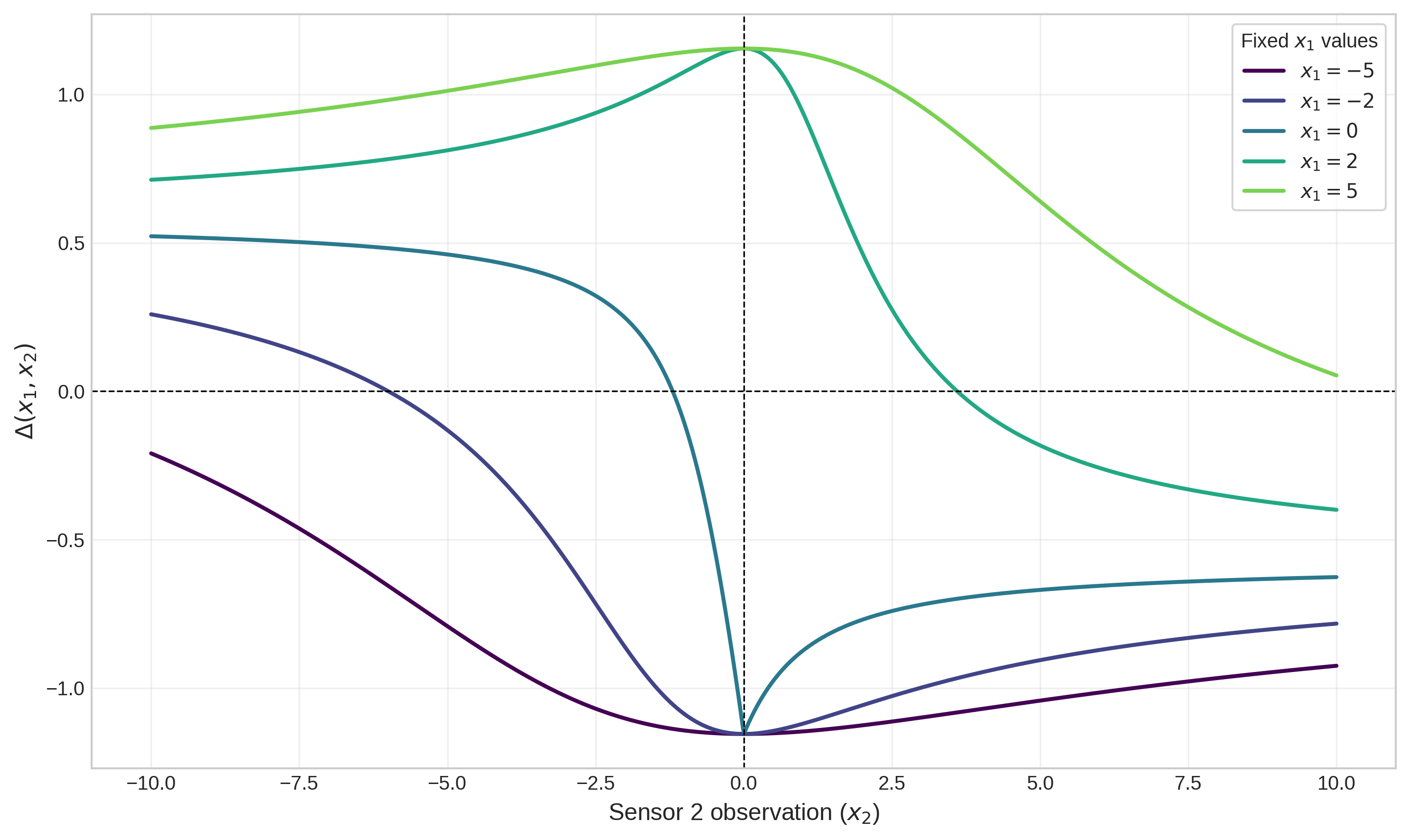}
\caption{
{
Plots of $\Delta(x_1, x_2)$ from Equation~\eqref{delta} as a function of $x_2$ for various fixed values of $x_1$. The observations follow a bivariate Laplacian distribution under an additive attack/anomaly.
}
}
\label{fig:delta_slice}
\end{figure}

\section{Conclusion}

This paper presents a statistical analysis of using the Shapley value for sensor attack/anomaly localization, specifically focusing on using optimum classifiers in the Shapley value calculation. We theoretically and numerically compare the Shapley value test, using $\phi(x_i)$, with the test using $v(i)$, for both independent and dependent observation scenarios. For the case of statistically independent sensor observations, we proved in Theorem III.1 that the Shapley value $\phi(x_i)$ is essentially a scaled version of $v(i)$. Consequently, both tests yield identical decision regions and error probabilities.  Thus, the 
computationally low complexity $v(i)$ test is as good as the 
more computationally expensive Shapley value test in this sense. 

{
In scenarios involving statistically dependent observations, we proved that the Shapley test is not equivalent to the test using $v(i)$ for the some cases involving a bivariate Gaussian and a bivariate Laplacian pdf with constant additive attacks/anomalies. %\textcolor{red}
{We} proved the two tests are also different for some cases with a bivariate Gaussian and Gaussian attacks/anomalies. 
In these cases, the two tests have different decision regions and yield different probabilities of error. We have proven that for some cases with a bivariate Gaussian distribution and a constant positive additive attack/anomaly, the test using Shapley is inferior compared to the test using the $v(i)$ function, while in some other cases the test using Shapley is better.  We also obtained numerical results that support what has been proven to illustrate our findings.

We realize the presented work just takes the first steps towards a full understanding of the statistically characterized performance of using the Shapley value 
for sensor anomaly localization, but we feel that we have made a strong case for the need of further study which we hope the community will help us pursue. 
}

\appendices

\printbibliography
\end{document}